\documentclass[a4paper,fleqn]{cas-dc}

\usepackage[authoryear]{natbib}

\usepackage{amsmath}
\usepackage{amssymb}
\usepackage{amsthm}
\newtheorem{theorem}{Theorem}

\newtheorem{proposition}{Proposition}

\usepackage{url}
\usepackage{xcolor}
\usepackage{hyperref}

\usepackage{booktabs}
\usepackage{multirow}
\usepackage{subcaption}
\usepackage{algorithm}
\usepackage{algpseudocode}
\usepackage{enumitem}

\def\tsc#1{\csdef{#1}{\textsc{\lowercase{#1}}\xspace}}
\tsc{WGM}
\tsc{QE}
\tsc{EP}
\tsc{PMS}
\tsc{BEC}
\tsc{DE}

\begin{document}
\let\WriteBookmarks\relax
\def\floatpagepagefraction{1}
\def\textpagefraction{.001}
\shorttitle{AdamFlow: Adam-based Wasserstein Gradient Flows for Surface Registration}
\shortauthors{Q. Ma et al.}

\title [mode = title]{AdamFlow: Adam-based Wasserstein Gradient Flows for Surface Registration in Medical Imaging}              

\author[1,2]{Qiang Ma}[orcid=0000-0003-0791-1731]
\cormark[1]
\author[2,3]{Qingjie Meng}[orcid=0000-0001-8728-4007]
\author[4]{Xin Hu}
\author[1,2]{Yicheng Wu}[orcid=0000-0002-7669-9167]
\cormark[1]
\author[1,2,5]{Wenjia Bai}[orcid=0000-0003-2943-7698]
\cormark[1]

\affiliation[1]{organization={Department of Brain Sciences, Imperial College London}, city={London}, country={UK}}
\affiliation[2]{organization={Department of Computing, Imperial College London}, city={London}, country={UK}}
\affiliation[3]{organization={School of Computer Science, University of Birmingham}, city={Birmingham}, country={UK}}
\affiliation[4]{organization={Department of Computer Science, Columbia University}, city={New York}, state={NY}, country={USA}}
\affiliation[5]{organization={Data Science Institute, Imperial College London}, city={London}, country={UK}}

\cortext[cor1]{Corresponding authors}

\begin{abstract}
Surface registration plays an important role for anatomical shape analysis in medical imaging. Existing surface registration methods often face a trade-off between efficiency and robustness. Local point matching methods are computationally efficient, but vulnerable to noise and initialisation. Methods designed for global point set alignment tend to incur a high computational cost. To address the challenge, here we present a fast surface registration method, which formulates surface meshes as probability measures and surface registration as a distributional optimisation problem. The discrepancy between two meshes is measured using an efficient sliced Wasserstein distance with log-linear computational complexity. We propose a novel optimisation method, AdamFlow, which generalises the well-known Adam optimisation method from the Euclidean space to the probability space for minimising the sliced Wasserstein distance. We theoretically analyse the asymptotic convergence of AdamFlow and empirically demonstrate its superior performance in both affine and non-rigid surface registration across various anatomical structures.
\end{abstract}



\begin{keywords}
Surface registration \sep Anatomical shape analysis \sep Sliced Wasserstein distance \sep Wasserstein gradient flow \sep Adam optimisation
\end{keywords}

\maketitle

\section{Introduction}
Surface registration is a fundamental task in medical image analysis and anatomical shape modelling, which aims at establishing point-wise spatial correspondence between anatomical structures represented as 3D surface meshes \citep{audette2000registration,tam2012registration,ma2017ot}. In contrast to image registration, surface registration operates directly on geometric representations, facilitating the alignment of complex surface features. Accurate and robust shape registration is crucial for quantifying inter-subject anatomical variability and intra-subject longitudinal changes, thereby providing the foundation for population-based shape analysis \citep{patenaude2011bayesian,robinson2014msm,bai2015atlas,burns2024genetic}, longitudinal studies \citep{thompson2004change,gerig2016longitudinal,garcia2018dynamic}, and statistical shape modelling \citep{frangi2003ssm,heimann2009ssm,ludke2022flowssm,bastian2023s3m}.

Over the past decades, a wide range of point cloud and surface registration strategies have been proposed \citep{besl1992icp,audette2000registration,chui2003tps,myronenko2006cpd,yeo2009spherical,tam2012registration,lombaert2013diffeomorphic,robinson2014msm}. Early approaches performed rigid or affine local point matching using the variants of the iterative closest point (ICP) algorithm \citep{besl1992icp,zhang1994iterative,rusinkiewicz2001icp}, which align two point sets by iteratively assigning nearest neighbour correspondences and estimating the transformation, achieving a computational complexity of $\mathcal{O}(N\log N)$ for $N$ points. Despite its efficiency and simplicity, such a local correspondence strategy is sensitive to initialisation of the transformation, noise, and outliers, and is therefore prone to being trapped in local minima during the optimisation procedure. 

To model rigid and non-rigid transformations of the global 3D structure, advanced shape registration methods have been introduced, including spline-based deformation models such as the thin plate spline \citep{bookstein2002tps,chui2003tps}, probabilistic frameworks such as the coherent point drift algorithm \citep{myronenko2006cpd,myronenko2010cpd}, and diffeomorphic registration that enforces topology-preserving transformations \citep{yeo2009spherical,charon2013varifold,ma2017ot}. However, most non-rigid registration approaches are computationally intensive with $\mathcal{O}(N^3)$ complexity due to solving a kernel-based linear system, which severely limits their application to high-resolution meshes containing tens of thousands of vertices. Although deep learning-based shape registration exhibits notable efficiency and scalability at inference time \citep{wang2019deep,lu2019deepvcp,aoki2019pointnetlk,sun2022ndf,le2024correspond}, they often face challenges in generalising across datasets or anatomical structures that are different from the training data.

To capture the global discrepancy between two shapes without relying on local correspondences, recent approaches model point clouds or surface meshes as probability measures and formulate shape registration as an optimisation problem to find an optimal transport distance, \emph{e.g.}, the Wasserstein distance, between two probability measures \citep{solomon2015swd,peyre2019ot}. However, the scalability of the vanilla Wasserstein distance is limited due to its  complexity of $\mathcal{O}(N^3\log N)$ \citep{villani2008ot,chewi2024sot}. As an efficient alternative approximation, the sliced Wasserstein distance and its variants were proposed \citep{bonneel2015swd,kolouri2019swd,nguyen2023swd,nguyen2024swd}, which reduce the complexity to $\mathcal{O}(N\log N)$. This is achieved by projecting high-dimensional distributions onto one-dimensional subspaces, where the optimal transport problem admits a closed-form solution, and then averaging the transport costs over multiple projection directions. Sliced Wasserstein distance has been widely adopted in point cloud registration tasks \citep{lai2017swd,nguyen2021swd,nguyen2023swd,nguyen2024swd}, while its application to surface registration remains relatively underexplored \citep{le2023diffeomorphic,nguyen2024swd}, which requires not only geometric alignment of point sets but also the preservation of mesh connectivity.

To minimise the sliced Wasserstein distance, existing approaches typically employ the Wasserstein gradient flow, which leverages a continuity partial differential equation to evolve a point cloud or a surface mesh toward a target probability measure \citep{jordan1998jko,ambrosio2005wgf,santambrogio2017wgf,chewi2024sot}. Analogous to gradient descent in the Euclidean space, Wasserstein gradient flow minimises an objective functional following the direction of the Wasserstein gradient in the space of probability measures \citep{ambrosio2005wgf,santambrogio2017wgf}. To accelerate the convergence of the Wasserstein gradient flow, a recent study \citep{chen2025accelerating} generalised classical momentum-based optimisation, such as the heavy ball or Nesterov method \citep{nesterov1983method}, from the Euclidean space to the probability space. It demonstrated the theoretical and empirical effectiveness in convex distributional optimisation. However, the convergence is not guaranteed for non-convex shape registration problems.

For non-convex optimisation problems in the Euclidean space, the Adam optimiser and its variants have achieved remarkable empirical performance \citep{kingma2015adam,dozat2016nadam,reddi2018amsgrad,loshchilov2019adamw,liu2020radam,barakat2021convergence,xiao2024adam}, particularly for high-dimensional optimisation such as training of large neural networks. The Adam optimiser adjusts the learning rate adaptively by estimating the first and second moments of the gradient, which enables it to navigate complex loss landscapes \citep{kingma2015adam}. To the best of our knowledge, few studies have incorporated the Adam method into the Wasserstein gradient flow for non-convex distributional optimisation, despite its strong potential for efficient surface registration.

In this work, we formulate surface registration in medical imaging as a distributional optimisation problem, by modelling surface meshes as probability measures. The discrepancy between two meshes is measured by the sliced Wasserstein distance. We develop algorithms for both affine and non-rigid surface registration. For affine surface registration, we demonstrate that the sliced Wasserstein distance achieves consistently better registration performance and lower computational cost compared to the traditional ICP algorithm. For non-rigid surface registration, we propose a hybrid optimisation strategy, which first optimises the sliced Wasserstein distance to capture global shape alignment and then minimises a Chamfer distance to ensure accurate local geometric matching, enabling coarse-to-fine registration.

To accelerate the optimisation of the sliced Wasserstein distance, we propose a novel method AdamFlow, a variant of Wasserstein gradient flow that adaptively estimates the first and second moments of the Wasserstein gradients, extending the Adam optimiser from the Euclidean space to the space of probability measures. We theoretically establish the asymptotic convergence of AdamFlow, showing that its solution converges to the set of critical points of the objective functional without the requirement of convexity. Using a particle-based numerical implementation, we empirically demonstrate that AdamFlow achieves faster convergence and superior performance on both affine and non-rigid mesh registration tasks for various anatomical structures (the liver, the pancreas, and the left ventricle of the heart), compared to existing momentum-based distributional optimisation methods. Our codes are publicly available at \href{https://github.com/m-qiang/AdamFlow}{https://github.com/m-qiang/AdamFlow}. The main contributions of this work are listed as follows:
\begin{enumerate}
\item  We formulate surface registration as a distributional optimisation problem and introduce the sliced Wasserstein distance to improve both affine and non-rigid surface registration.
\item  We propose AdamFlow, which generalises the Adam optimisation method to the space of probability measures, and theoretically prove its asymptotic convergence for non-convex distributional optimisation.
\item We empirically demonstrate that AdamFlow leads to improved registration accuracy and faster convergence in surface registration tasks across multiple organs, compared to existing momentum-based methods.
\end{enumerate}

\section{Preliminaries}
In this work, we model a surface mesh as a probability measure and formulate surface registration as a distributional optimisation problem. Here, we introduce necessary background for distributional optimisation in the space of probability measures.

\paragraph{Distributional optimisation.}
Let $\mathcal{P}(\mathbb{R}^d)$ be the set of all probability measures $\mu$ supported on $\mathbb{R}^d$. An unconstrained \textit{distributional optimisation} problem is defined as
\begin{equation}\label{eq:d_opt}
\min_{\mu\in\mathcal{P}(\mathbb{R}^d)}F[\mu],
\end{equation}
where $F:\mathcal{P}(\mathbb{R}^d)\rightarrow\mathbb{R}$ denotes an objective functional of the probability measure $\mu$. The distributional optimisation problem aims to find an optimal probability measure $\mu_*$ such that the functional $F[\cdot]$ is minimised. Table~\ref{tab:wasserstein_grad} provides commonly used objective functionals in distributional optimisation \citep{chewi2024sot}.

\paragraph{Wasserstein distance.} In distributional optimisation, the discrepancy between two probability measures can be characterised by their \textit{Wasserstein distance}. Let $\mathcal{P}_p(\mathbb{R}^d)$ be the set of probability measures $\mu$ supported on $\mathbb{R}^d$ with finite $p$-th moment, \emph{i.e.,}
$\int_{\mathbb{R}^d}\|x\|^p\mathrm{d}\mu(x)<\infty$. For two probability measures $\mu,\nu\in\mathcal{P}_p(\mathbb{R}^d)$, their \textit{Wasserstein distance} is defined by
\begin{equation}\label{eq:wasserstein}
W_p(\mu,\nu)=\inf_{\gamma\in\Gamma(\mu,\nu)}\left(\int_{\mathbb{R}^d\times\mathbb{R}^d}\|x-y\|^p\mathrm{d} \gamma(x,y)\right)^{1/p},
\end{equation}
where $\Gamma(\mu,\nu)$ denotes the set of all couplings of $\mu$ and $\nu$. In practice, the computation of the Wasserstein distance can be expensive. For two discrete probability distributions $\mu$ and $\nu$ supported on $N$ points, the Wasserstein distance $W_p(\mu,\nu)$ is calculated by linear programming with $\mathcal{O}(N^3\log N)$ time complexity \citep{villani2008ot,chewi2024sot}.

\paragraph{Sliced Wasserstein distance.} As a computationally efficient approximation of the Wasserstein distance, the \textit{sliced Wasserstein distance} (SWD) uses \textit{Radon transform} to project probability measures onto one-dimensional subspaces \citep{bonneel2015swd}, where the Wasserstein distance can be calculated efficiently with a time complexity of $\mathcal{O}(N\log N)$. For probability measures $\mu,\nu\in\mathcal{P}_p(\mathbb{R}^d)$, the SWD between probability measures $\mu$ and $\nu$ is defined as
\begin{equation}\label{eq:sliced_wasserstein}
SW_p(\mu,\nu)=\left(\int_{\mathbb{S}^{d-1}}W_p^p(\pi_{\theta}\sharp\mu,\pi_{\theta}\sharp\nu)\mathrm{d}\theta\right)^{1/p},
\end{equation}
where $\mathbb{S}^{d-1}\subset\mathbb{R}^d$ denotes a unit sphere, $\theta\in\mathbb{S}^{d-1}$ denotes a unit vector with $\|\theta\|=1$, $\pi_{\theta}:\mathbb{R}^d\rightarrow\mathbb{R}$ denotes the projection function defined as $\pi_{\theta}(x)=\theta^{\top}x$, and $\pi_{\theta}\sharp\mu$ denotes an one-dimensional \textit{pushforward} measure of $\mu$, defined by $(\pi_{\theta}\sharp\mu)(x):=\mu(\pi_{\theta}^{-1}(x))$.
Numerically, the SWD can be approximated by Monte Carlo sampling with $L$ projections \citep{bonneel2015swd}:
\begin{equation}\label{eq:sliced_wasserstein_mc}
SW_p(\mu,\nu)\approx\left(\frac{1}{L}\sum_{l=1}^L W_p^p(\pi_{\theta_l}\sharp\mu,\pi_{\theta_l}\sharp\nu)\right)^{1/p},
\end{equation}
where the Wasserstein distance $W_p(\pi_{\theta_l}\sharp\mu,\pi_{\theta_l}\sharp\nu)$ between one-dimensional probability measures has a closed-form solution \citep{villani2008ot,chewi2024sot}.

\paragraph{Wasserstein gradient.} 
Unlike optimisation in the Euclidean space, distributional optimisation is performed in the \textit{Wasserstein space} $(\mathcal{P}_p(\mathbb{R}^d),W_p)$, a metric space over probability measures equipped with Wasserstein metric $W_p$. In this work, we only consider the 2-Wasserstein metric $W_2$, which is analogous to the $L^2$ norm in the Euclidean space. 

In the Euclidean space, gradient-based optimisation methods search for the minima of an objective function following the steepest descent, \emph{i.e.}, the direction of gradient. In the Wasserstein space, the concept of gradient can be extended to a functional of probability measures. According to the Otto calculus \citep{otto2001geometry}, the \textit{Wasserstein gradient} $\nabla_{W}F[\mu]:\mathbb{R}^d\rightarrow\mathbb{R}^d$ of a functional $F$ is defined as
\begin{equation}
    \nabla_{W}F[\mu]:=\nabla\delta F[\mu],
\end{equation}
where $\delta F[\mu]:\mathbb{R}^d\rightarrow\mathbb{R}$ denotes the first variation of the functional $F$ with respect to the probability measure $\mu$, and $\nabla$ denotes the gradient in the Euclidean space. Table~\ref{tab:wasserstein_grad} provides the Wasserstein gradients of common objective functionals in distributional optimisation problems \citep{ambrosio2005wgf,chewi2024sot}. For sliced 2-Wasserstein distance $\frac{1}{2}SW_2^2(\mu,\mu_*)$, the Wasserstein gradient with respect to $\mu$ is defined as \citep{cozzi2025long}
\begin{equation}\label{eq:sliced_wasserstein_grad}
\begin{split}
\frac{1}{2}\nabla_W SW_2^2(\mu,\mu_*)(x)
&=\int_{\mathbb{S}^{d-1}}\left(\theta^{\top}x-\mathcal{T}_{\theta}(\theta^{\top}x)\right)\theta\mathrm{d}\theta\\
&\approx\frac{1}{L}\sum_{l=1}^{L}\left(\theta_l^{\top}x-\mathcal{T}_{\theta_l}(\theta_l^{\top}x)\right)\theta_l,\\
\end{split}
\end{equation}
where $\mathcal{T}_{\theta}$ denotes the 1D optimal transport map from the projection $\pi_{\theta}\sharp\mu$ to $\pi_{\theta}\sharp\mu_*$.

\begin{table}[!h]
\centering
\caption{Commonly used objective functionals and their Wasserstein gradients. For 2-Wasserstein distance, $\mathcal{T}:\mathbb{R}^d\rightarrow\mathbb{R}^d$ denotes the optimal transport map from $\mu$ to $\mu_*$ such that the pushforward measure $\mathcal{T}\sharp\mu=\mu_*$.}
\begin{tabular}{ccc}
\toprule
Objective & $F[\mu]$ & $\nabla_W F[\mu]$ \\
\midrule
Potential energy &
$\int_{\mathbb{R}^d} V(x)\mathrm{d}\mu(x)$ &
$\nabla V(x)$ \\
Internal energy & $\int_{\mathbb{R}^d}U(\mu(x))\mathrm{d}x$ &
$\nabla U'(\mu(x))$ \\
Negative entropy & 
$\int_{\mathbb{R}^d}\mu(x)\log\mu(x)\mathrm{d}x$ &
$\nabla\log\mu(x)$ \\
KL divergence & $KL(\mu||\mu_*)$ & 
$\nabla\log\frac{\mu(x)}{\mu_*(x)}$ \\
2-Wasserstein distance &
$\frac{1}{2}W_2^2(\mu,\mu_*)$ &
$x-\mathcal{T}(x)$\\
\bottomrule
\end{tabular}
\label{tab:wasserstein_grad}
\end{table}

Based on the definition of the Wasserstein gradient, a probability measure $\mu_*\in\mathcal{P}_2(\mathbb{R}^d)$ is called a \textit{critical point} of the functional $F[\mu]$, if the Wasserstein gradient of $F$ satisfies $\nabla_W F[\mu_*](x)=0$ for almost everywhere (a.e.) $x\in\textrm{supp}(\mu_*)$, where $\textrm{supp}(\mu_*)$ denotes the support of the probability measure $\mu_*$.

\paragraph{Wasserstein gradient flow.} 
To perform distributional optimisation, the \textit{Wasserstein gradient flow} (WGF) is employed and defined by the following continuity equation \citep{jordan1998jko,ambrosio2005wgf,santambrogio2017wgf}:
\begin{equation}\label{eq:wgf}
\partial_t\mu_t=\nabla\cdot(\mu_t\nabla_W F[\mu_t]),
\end{equation}
where $t\in\mathbb{R}_+$ and $\mu_t:\mathbb{R}\rightarrow\mathcal{P}_2(\mathbb{R}^d)$ denotes a curve of probability measure that evolves with time $t$. Solving the partial differential equation (PDE) (\ref{eq:wgf}) enables us to find the critical points of the objective functional $F$. This is analogous to the gradient flow in the Euclidean space, \emph{i.e.}, the continuous form of the gradient descent.

\begin{figure*}[t]
\centering
\includegraphics[width=1.0\linewidth]{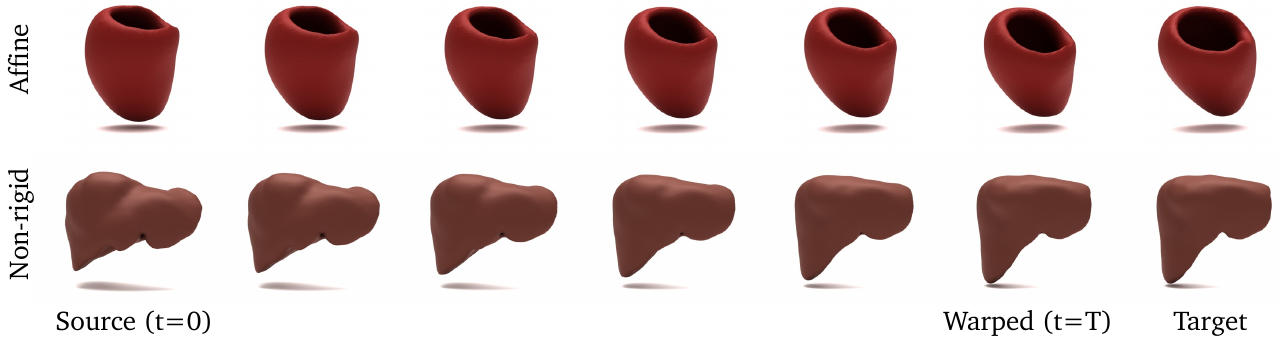}
\caption{Illustration of affine surface registration of the left ventricle of the heart and non-rigid surface registration of the liver. The surface registration is formulated as a distributional optimisation problem between a source surface ($t=0$) and a target surface. The registration is performed by integrating a Wasserstein gradient flow for $t\in[0,T]$ to optimise an objective functional.}
\label{fig:mesh_registration}
\end{figure*}

\section{Surface registration}
In this section, we model a surface mesh as a probability measure and formulate surface registration as distributional optimisation problems over the Wasserstein space. For both affine and non-rigid registration problems as illustrated in Figure~\ref{fig:mesh_registration}, we define the optimisation objective based on the SWD to efficiently measure the discrepancy between a warped source surface and a target surface.

\subsection{Surface mesh as a probability measure}
Let $\mathcal{M}=(\mathcal{V},\mathcal{E},\mathcal{F})$ be a triangular mesh with vertex set $\mathcal{V}\subset\mathbb{R}^3$, edge set $\mathcal{E}$, and face set $\mathcal{F}$. Such a discrete mesh representation can be modelled by a continuous probability measure $\mu^{\mathcal{M}}$ \citep{le2023diffeomorphic}, where each point on the mesh follows the distribution 
\begin{equation}\label{eq:mesh_distribution}
\begin{split}
\mu^{\mathcal{M}}(x)
&=\sum_{i=1}^{|\mathcal{F}|}\mu^{\mathcal{M}}(x|f^i)\mu^{\mathcal{M}}(f^i)\\
&=\sum_{i=1}^{|\mathcal{F}|}\Bigg(\frac{1}{\mathrm{Vol}(f^i)}\sum_{i=1}^{|\mathcal{F}|}\frac{\mathrm{Vol}(f^i)}{\sum_{j=1}^{|\mathcal{F}|}\mathrm{Vol}(f^j)}\delta_{f^i}\Bigg),
\end{split}
\end{equation}
where $f^i\in\mathcal{F}$ is the $i$-th face, $|\mathcal{F}|$ is the number of faces, $\mathrm{Vol}(f)$ is the area of the face $f^i$, and $\delta_{f^i}$ is a Dirac measure at the face $f^i$. Intuitively, to sample a point $x$ on the mesh $\mathcal{M}$, we first sample a face $f^i$ from the categorical measure $\mu^{\mathcal{M}}(f^i)$ according to the face areas, and then sample a point $x$ on the face $f^i$ with probability $1/\mathrm{Vol}(f^i)$. In addition to the face-based probability measure, a mesh $\mathcal{M}$ can also be approximated by an empirical measure of the vertices, \emph{i.e.},
\begin{equation}\label{eq:mesh_empirical}
\mu^{\mathcal{M}}\approx\hat{\mu}^{\mathcal{M}}:=\frac{1}{|\mathcal{V}|}\sum_{i=1}^{|\mathcal{V}|}\delta_{q^i},
\end{equation}
where $q^i\in\mathcal{V}$ is the $i$-th vertex, $|\mathcal{V}|$ is the number of vertices.

\subsection{Surface registration in the Wasserstein space}
Given a source mesh $\mathcal{M}_0=(\mathcal{V}_0,\mathcal{E}_0,\mathcal{F}_0)$ with probability measure $\mu^{\mathcal{M}_0}$ and a target mesh $\mathcal{M}_*=(\mathcal{V}_*,\mathcal{E}_*,\mathcal{F}_*)$ with probability measure $\mu^{\mathcal{M}_*}$, surface registration aims to find a transformation $\phi:\mathbb{R}^3\rightarrow\mathbb{R}^3$ to warp the source mesh to the target surface, such that the discrepancy $D(\phi\sharp\mu^{\mathcal{M}_0},\mu^{\mathcal{M}_*})$ between the two meshes is minimised. We can formulate surface registration as the following distributional optimisation problem in the Wasserstein space:
\begin{equation}\label{eq:registration}
\min_{\mu=\phi\sharp\mu^{\mathcal{M}_0}}F[\mu]:=D(\mu,\mu^{\mathcal{M}_*})+R[\mu],
\end{equation}
where $D:\mathcal{P}_2(\mathbb{R}^3)\times\mathcal{P}_2(\mathbb{R}^3)\rightarrow\mathbb{R}$ is the discrepancy functional, and $R:\mathcal{P}_2(\mathbb{R}^3)\rightarrow\mathbb{R}$ is the mesh regularisation term. In order to stabilise the optimisation procedure, we approximate the source mesh $\mu^{\mathcal{M}_0}$ by the empirical measure of its vertices $\hat{\mu}^{\mathcal{M}_0}$ defined in Eq.~(\ref{eq:mesh_empirical}).

We suppose $\phi_t$ varies with time $t$ during optimisation and define $\mu_t:=\phi_t\sharp\mu^{\mathcal{M}_0}$. Then, the mesh registration problem (\ref{eq:registration}) can be solved using the WGF (\ref{eq:wgf}) with Wasserstein gradient $\nabla_WF[\mu]$, analogous to iterative gradient descent in the Euclidean space. In practice, the surface registration problem (\ref{eq:registration}) is usually non-convex, as the objective functional $F$ is defined by a non-convex discrepancy between probability measures. Therefore, instead of seeking a global optimal transformation $\phi_*$, we aim to find a transformation $\phi_{\infty}$ such that the probability measure $\mu_{\infty}=\phi_{\infty}\sharp\mu^{\mathcal{M}_0}$ is a critical point of the objective functional $F[\mu]$ as $t\rightarrow\infty$. Next, we will discuss cases where the transformation $\phi$ is affine and non-rigid respectively.

\subsection{Affine surface registration}
For affine surface registration, we define $\phi(x):=Ax+b$ as an affine transformation with affine matrix $A\in\mathbb{R}^3\times\mathbb{R}^3$ and translation vector $b\in\mathbb{R}^3$. The transformed mesh is expressed as an empirical measure $\phi\sharp\mu^{\mathcal{M}_0}\approx\phi\sharp\hat{\mu}^{\mathcal{M}_0}=\frac{1}{|\mathcal{V}|}\sum_{i=1}^{|\mathcal{V}|}\delta_{Aq^i_0+b}$ for $q^i_0\in\mathcal{V}_0$. The affine matrix and translation can be optimised by minimising a discrepancy functional without the mesh regularisation term.

For the classic ICP affine registration algorithm \citep{besl1992icp}, the discrepancy is defined as
\begin{equation}\label{eq:icp}
D_{\mathrm{icp}}(\mu,\mu_*)=\frac{1}{2}\int_{\mathbb{R}^3}\inf_{x_*\in\mathrm{supp}(\mu_*)}\|x-x_*\|^2\mathrm{d}\mu(x),
\end{equation}
which measures the nearest distance from the transformed source mesh $\mu=\phi\sharp\mu^{\mathcal{M}_0}$ to the target mesh $\mu_*=\mu^{\mathcal{M}_*}$. However, the local correspondence strategy of the ICP algorithm may cause the optimisation to be trapped in local minima. Therefore, in this study, we leverage the sliced Wasserstein discrepancy 
$SW_2(\mu,\mu_*)$ to efficiently capture non-local mesh alignment for affine surface registration. The distributional optimisation problem (\ref{eq:registration}) for affine surface registration can be re-formulated as:
\begin{equation}\label{eq:affine_registration}
\min_{\mu=\phi\sharp\mu^{\mathcal{M}_0}}F[\mu]=\frac{1}{2}SW_2^2(\mu,\mu^{\mathcal{M}_*}).
\end{equation}
The Wasserstein gradient of $F[\mu]$ is provided by Eq.~(\ref{eq:sliced_wasserstein_grad}).

\subsection{Non-rigid surface registration}\label{sec:nonrigid}
For non-rigid surface registration, we define $\phi(q^i_0):=q^i_0+\Delta q^i$ as vertex-wise transformation, in which $\Delta q^i\in\mathbb{R}^3$ denotes the displacement of the $i$-th vertex $q^i_0\in\mathcal{V}_0$. The transformed source mesh is expressed as $\phi\sharp\mu^{\mathcal{M}_0}\approx\phi\sharp\hat{\mu}^{\mathcal{M}_0}=\frac{1}{|\mathcal{V}|}\sum_{i=1}^{|\mathcal{V}|}\delta_{q^i_0+\Delta q^i}$. To regularise non-rigid surface deformation, we introduce the following mesh Laplacian constraint:
\begin{equation}\label{eq:laplacian}
R_{\mathrm{lap}}[\phi\sharp\hat{\mu}^{\mathcal{M}_0}]=\sum_{i=1}^{|\mathcal{V}_0|}\frac{1}{|\mathcal{A}(i)|}\sum_{j\in\mathcal{A}(i)}\frac{1}{2}\|q^i_0-q^j_0+\Delta q^i-\Delta q^j\|^2,
\end{equation}
where $\mathcal{A}(i)$ denotes the adjacency list of the $i$-th vertex.

A commonly used discrepancy metric for non-rigid surface registration is the Chamfer distance \citep{fan2017chamfer}, which computes the bidirectional nearest distances with local point matching. The Chamfer distance between two probability measures is defined as:
\begin{equation}\label{eq:chamfer}
\begin{split}
D_{\mathrm{cham}}(\mu,\mu_*)
&=\frac{1}{2}\int_{\mathbb{R}^3}\inf_{x_*\in\mathrm{supp}(\mu_*)}\|x-x_*\|^2\mathrm{d}\mu(x)\\
&+\frac{1}{2}\int_{\mathbb{R}^3}\inf_{x\in\mathrm{supp}(\mu)}\|x_*-x\|^2\mathrm{d}\mu_*(x_*).
\end{split}
\end{equation}
Compared to the Chamfer distance, the SWD measures the global discrepancy between two probability distributions. Although the effectiveness of the SWD has been validated in non-rigid point cloud registration tasks \citep{lai2017swd,nguyen2021swd,nguyen2023swd,nguyen2024swd}, its performance for non-rigid surface registration problems, which require the preservation of mesh connectivity, remains to be further examined.

In this work, we propose a hybrid distributional optimisation objective, incorporating both the Chamfer and sliced Wasserstein discrepancies for non-rigid surface registration, formulated as:
\begin{equation}\label{eq:nonrigid_registration}
\begin{split}
\min_{\mu=\phi\sharp\mu^{\mathcal{M}_0}}F[\mu]
&=\lambda_{\mathrm{cham}}D_{\mathrm{cham}}(\mu,\mu^{\mathcal{M}_*})\\
&+\frac{\lambda_{\mathrm{sw}}}{2}SW_2^2(\mu,\mu^{\mathcal{M}_*})+\lambda_{\mathrm{lap}}R_{\mathrm{lap}}[\mu],
\end{split}
\end{equation}
where $\lambda_{\mathrm{cham}}, \lambda_{\mathrm{sw}}, \lambda_{\mathrm{lap}}$ are the weights for different terms. In particular, we let $\lambda_{\mathrm{cham}}=0$ and $\lambda_{\mathrm{sw}}=1$ at the beginning of the optimisation procedure to encourage non-local surface matching. This provides an ideal initial surface for the following local refinement. Subsequently, we let $\lambda_{\mathrm{cham}}=1$ and $\lambda_{\mathrm{sw}}=0$ to capture local surface correspondence. In our experiments, we empirically demonstrate that such a hybrid coarse-to-fine optimisation strategy yields significant improvement compared to using either the Chamfer distance or the SWD alone as the objective.

To solve the non-rigid surface registration problem (\ref{eq:nonrigid_registration}) using WGF, the Wasserstein gradient of the Chamfer distance $D_{\mathrm{cham}}(\mu,\mu_*)$ with respect to $\mu$ is derived by
\begin{equation}\label{eq:chamfer_grad}
\begin{split}
\nabla_W D_{\mathrm{cham}}&(\mu,\mu_*)(x)\\
&=x-\mathcal{T}_{\mu_*}(x)
+\int_{x=\mathcal{T}_{\mu}(x_*)}(x-x_*)~\mathrm{d}\mu_*(x_*),
\end{split}
\end{equation}
where
\begin{equation}\label{eq:chamfer_match}
\begin{split}
\mathcal{T}_{\mu_*}(x)&\in\arg\min_{x_*\in\mathrm{supp}(\mu_*)}\|x-x_*\|^2,\\
\mathcal{T}_{\mu}(x_*)&\in\arg\min_{x\in\mathrm{supp}(\mu)}\|x_*-x\|^2.
\end{split}
\end{equation}
For the mesh Laplacian regularisation $R_{\mathrm{lap}}$, the Wasserstein gradient is calculated as
\begin{equation}\label{eq:laplacian_grad}
\begin{split}
\nabla_W R_{\mathrm{lap}}[\phi\sharp\hat{\mu}^{\mathcal{M}_0}]&(q^i_0+\Delta q^i)\\
=&\frac{1}{|\mathcal{A}(i)|}\sum_{j\in\mathcal{A}(i)}(q^i_0-q^j_0+\Delta q^i-\Delta q^j).
\end{split}
\end{equation}

\section{AdamFlow for distributional optimisation}
Analogous to gradient-based optimisation in the Euclidean space, the distributional optimisation problem for surface registration defined in Eq.~(\ref{eq:registration}) can be solved by the WGF (\ref{eq:wgf}).
To further accelerate the optimisation procedure, we introduce \textit{AdamFlow}, a variant of the WGF with adaptive moment estimation for non-convex distributional optimisation. Firstly, based on the continuous form of the Adam optimisation method \citep{kingma2015adam, barakat2021convergence}, we formulate AdamFlow as a flow PDE that characterises the evolution of probability measures. Then, we prove that the solution of AdamFlow will converge asymptotically towards the set of critical points of the objective functional without convexity assumption. Finally, we present a particle-based approach for the numerical implementation of the proposed AdamFlow method.

\subsection{Formulation of AdamFlow}
\paragraph{Discrete Adam.}
We first introduce the classic Adam optimisation algorithm in the discrete setting \citep{kingma2015adam}. We consider an unconstrained optimisation problem $\min_{x\in\mathbb{R}^d} f(x)$ in the Euclidean space with objective function $f:\mathbb{R}^d\rightarrow\mathbb{R}$. For any vector $x\in\mathbb{R}^d$ and power $p\in\mathbb{R}$, we define $x^p:=[x_1^p,...,x_d^p]^{\top}$, where $x_i$ is the $i$-th entry of the vector $x$. For $k=0,...,K-1$, the classic Adam algorithm is expressed as the following iteration:
\begin{equation}\label{eq:adam_discrete}
\left\{\begin{aligned}
m_{k+1} & = \alpha m_{k} + (1-\alpha)\nabla f(x_{k}), \\
v_{k+1} & = \beta v_{k} + (1-\beta)(\nabla f(x_{k}))^2, \\
x_{k+1} & = x_{k}-\eta\frac{m_{k+1}/(1-\alpha^{k+1})}{\sqrt{v_{k+1}/(1-\beta^{k+1})}+\epsilon}, 
\end{aligned}\right.
\end{equation}
where $x_k$ denotes the variable to be optimised, $m_k$ and $v_k$ denote the biased estimation of the first and second moments of the gradient $\nabla f$, $\alpha,\beta\in[0,1)$ denote the decay rates of the exponential moving average respectively, $1-\alpha^{k+1}$ and $1-\beta^{k+1}$ denote the coefficients for bias correction, $\epsilon>0$ denotes a small number, and $\eta>0$ denotes the learning rate. For $k=0$, $x_0$ is an arbitrary initial value, and the estimated moments $m_0$ and $v_0$ are initialised to $0$.

\paragraph{Continuous Adam.}
Derived from the discrete Adam algorithm (\ref{eq:adam_discrete}), the continuous form of the Adam optimiser is formulated as the following dynamical system, known as Adam ODE \citep{barakat2021convergence}:
\begin{equation}\label{eq:adam_ode}
\left\{\begin{aligned}
\dot{m} & = (1-\alpha)(\nabla f(x)-m),  & m(0)&=0,\\
\dot{v} & = (1-\beta)((\nabla f(x))^2-v),  & v(0)&=0,\\
\dot{x} & = - \eta g_t(m,v),  & x(0)&=x_0,
\end{aligned} \right.
\end{equation}
where $t\in\mathbb{R}_+$, $x(t)\in\mathbb{R}^d$ describes the evolution of the optimisation variable, $m(t),v(t)\in\mathbb{R}^d$ are the estimation of the first and second moments of the gradient, and $g_t(m,v)\in\mathbb{R}^d$ is defined as
\begin{equation}\label{eq:gt}
g_t(m,v):=\frac{m/(1-e^{-(1-\alpha)t})}{\sqrt{v/(1-e^{-(1-\beta)t})}+\epsilon}, 
\end{equation}
where $1-e^{-(1-\alpha)t}$ and $1-e^{-(1-\beta)t}$ are the bias correction coefficients for the continuous exponential moving average, and $\eta>0$ is the learning rate. The initial value of Adam ODE (\ref{eq:adam_ode}) is set to $(x(0),m(0),v(0))=(x_0,0,0)$ to guarantee the existence of solutions \citep{barakat2021convergence}. Adam ODE defines a gradient flow with adaptive moment estimation for optimisation problems in Euclidean space.

\paragraph{AdamFlow.}
Inspired by the Hamiltonian flows \citep{chen2025accelerating} that introduce momentum terms to accelerate convex distributional optimisation, here we propose \textit{AdamFlow} with adaptive moment estimation of the Wasserstein gradient to perform distributional optimisation (\ref{eq:d_opt}) with a non-convex objective functional $F[\mu]$.

We extend the Adam method from the Euclidean space to the space of probability measures. Let the probability measure
\begin{equation*}
\begin{split}
\rho\in&\mathcal{P}_2(\mathbb{R}^d\times\mathbb{R}^d\times\mathbb{R}^d)\\
&:=\left\{\rho~\Big|\int_{\mathbb{R}^{3d}} \|x\|^2+\|m\|^2+\|v\|^2\mathrm{d}\rho(x,m,v)<\infty\right\}
\end{split}
\end{equation*}
describe the probability density of particles $(x,m,v)\in\mathbb{R}^{3d}$. We formulate AdamFlow as a flow PDE over the probability measure such that the evolution of the particles $(x,m,v)$ follows the same dynamics as the Adam ODE system (\ref{eq:adam_ode}). More precisely, the AdamFlow PDE is defined as:
\begin{equation}\label{eq:adamflow}
\begin{split}
\partial_t\rho_t=&-\nabla_m\cdot\left(\rho_t(1-\alpha)\left(\nabla_W F[\mu_t]-m\right)\right)\\
&-\nabla_v\cdot\left(\rho_t(1-\beta)\left((\nabla_W F[\mu_t])^2-v\right)\right)\\
&+\nabla_x\cdot\left(\rho_t\eta g_t(m,v)\right),\\
\end{split}
\end{equation}
where $\rho_t:\mathbb{R}_+\rightarrow\mathcal{P}_2(\mathbb{R}^d\times\mathbb{R}^d\times\mathbb{R}^d)$ is a curve that describes the evolution of the probability measure, $\mu_t=\rho_t^X:=\rho_t(\cdot,\mathbb{R}^d,\mathbb{R}^d)$ is the $x$-marginal of $\rho_t$, $g_t(m,v)$ is defined in Eq.~(\ref{eq:gt}), $\nabla_W F:\mathbb{R}^d\rightarrow\mathbb{R}^d$ is the Wasserstein gradient of the objective functional $F$, and $(\nabla_W F[\mu])^2(x):=(\nabla_W F[\mu](x))^2$. The parameters $\alpha,\beta\in[0,1)$ are the decay rates of the exponential moving average, and $\eta>0$ is the learning rate. We assume the initial condition $\rho_0$ of the AdamFlow (\ref{eq:adamflow}) satisfies $\rho_0(x,m,v)=\mu_0(x)\delta_{(0,0)}(m,v)$, where $\mu_0$ is the initial $x$-marginal and $\delta_{(0,0)}$ is a Dirac measure at $(0,0)\in\mathbb{R}^d\times\mathbb{R}^d$. In other words, the initial estimations of the first and second moments of the Wasserstein gradient $\nabla_W F[\mu]$ satisfy $m=v=0$ a.e. on $\mathbb{R}^d$.

AdamFlow is constructed such that $x$-marginal $\mu_t$ of the solution $\rho_t$ of the flow PDE (\ref{eq:adamflow}) converges to the set of critical points of the objective functional $F$. The curve $\rho_t$ evolves particles $(x,m,v)$ following the same dynamics as the Adam method, such that $m$ and $v$ adaptively estimate the first and second moments of the Wasserstein gradient $\nabla_W F[\mu]$. 
In fact, if we define an object functional $F_f[\mu]:=\int_{\mathbb{R}^d} f(x)\mathrm{d}\mu(x)$, then the AdamFlow PDE (\ref{eq:adamflow}) becomes the corresponding continuity equation derived from the Adam ODE system (\ref{eq:adam_ode}), which implies that the solution $\rho_t$ of AdamFlow indeed evolves the particles $(x,m,v)$ along the trajectories defined by the Adam ODE (\ref{eq:adam_ode}). More specifically, analogous to the Hamiltonian flow method \citep{chen2025accelerating}, we have the following propositions:

\begin{proposition}\label{prop1}
If $(x(t),m(t),v(t))$ is a solution of Adam ODE (\ref{eq:adam_ode}) with an objective function $f$, then the trajectory of the Dirac measure $\delta_{(x(t),m(t),v(t))}$ is a solution of AdamFlow (\ref{eq:adamflow}) with the objective functional $F_f[\mu]=\int_{\mathbb{R}^d}f(x)\mathrm{d}\mu(x)$ in the distributional sense.
\end{proposition}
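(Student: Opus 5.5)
We verify the weak (distributional) formulation of AdamFlow~(\ref{eq:adamflow}) directly. First we compute the Wasserstein gradient of the potential energy $F_f[\mu]=\int_{\mathbb{R}^d}f\,\mathrm{d}\mu$. Its first variation is $\delta F_f[\mu]=f$, so by Table~\ref{tab:wasserstein_grad} we have $\nabla_W F_f[\mu](x)=\nabla f(x)$, independently of $\mu$. Consequently, with $\mu_t=\rho_t^X$, the AdamFlow PDE reduces to a \emph{linear} continuity equation
\begin{equation*}
\partial_t\rho_t+\nabla_{(x,m,v)}\cdot\big(\rho_t\,V_t(x,m,v)\big)=0,
\end{equation*}
with the time-dependent vector field
\begin{equation*}
V_t(x,m,v)=\Big(-\eta g_t(m,v),\ (1-\alpha)(\nabla f(x)-m),\ (1-\beta)\big((\nabla f(x))^2-v\big)\Big).
\end{equation*}
These are exactly the right-hand sides of the Adam ODE~(\ref{eq:adam_ode}). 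The sign bookkeeping needs care: the $m$- and $v$-terms carry a minus in front of the divergence, while the $x$-term carries a plus together with the minus in $\dot x=-\eta g_t$.

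Next we state the distributional sense. A curve $\rho_t$ is a weak solution if, for every test function $\varphi\in C_c^\infty((0,\infty)\times\mathbb{R}^{3d})$,
\begin{equation*}
\int_0^\infty\!\!\int_{\mathbb{R}^{3d}}\big(\partial_t\varphi+V_t\cdot\nabla_{(x,m,v)}\varphi\big)\,\mathrm{d}\rho_t\,\mathrm{d}t=0,
\end{equation*}
with the initial datum attained weakly. Equivalently, for every $\psi\in C_c^\infty(\mathbb{R}^{3d})$, the map $t\mapsto\int\psi\,\mathrm{d}\rho_t$ is absolutely continuous and its derivative equals $\int V_t\cdot\nabla\psi\,\mathrm{d}\rho_t$.

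Now substitute $\rho_t=\delta_{z(t)}$ with $z(t)=(x(t),m(t),v(t))$. Then $\int\psi\,\mathrm{d}\rho_t=\psi(z(t))$. Since $z$ is a $C^1$ solution of~(\ref{eq:adam_ode}) on $(0,\infty)$ and continuous at $0$, the chain rule gives
\begin{equation*}
\tfrac{\mathrm{d}}{\mathrm{d}t}\psi(z(t))=\nabla\psi(z(t))\cdot\dot z(t)=\nabla\psi(z(t))\cdot V_t(z(t))=\int V_t\cdot\nabla\psi\,\mathrm{d}\delta_{z(t)}.
\end{equation*}
This is exactly the weak formulation. Integrating against $\varphi$ in time, or integrating by parts, yields the space-time version. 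The initial condition is also consistent with the assumption on $\rho_0$: $\delta_{z(0)}=\delta_{x_0}\otimes\delta_{(0,0)}$, which is $\mu_0\delta_{(0,0)}$ with $\mu_0=\delta_{x_0}$. Finally, $\delta_{z(t)}\in\mathcal{P}_2$ trivially.

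The main obstacle is the singularity of $g_t$ at $t=0$. There both bias-correction denominators $1-e^{-(1-\alpha)t}$ and $1-e^{-(1-\beta)t}$ vanish, so $V_t$ is not obviously well defined or integrable near $t=0$. To handle this, we rely on the existence theory of \citet{barakat2021convergence} for the initial value $(x_0,0,0)$. It provides a solution that is continuous on $[0,\infty)$ and $C^1$ on $(0,\infty)$, with $\dot x$ bounded near $0$: by Taylor expansion, $m(t)/(1-e^{-(1-\alpha)t})\to\nabla f(x_0)$ and similarly for $v$, so $g_t(m(t),v(t))$ stays bounded. We therefore use test functions supported in $t>0$, or integrate the chain-rule identity on $[s,T]$ and let $s\downarrow0$ using continuity of $z$. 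The integrand $V_t(z(t))\cdot\nabla\psi(z(t))$ is then locally integrable, which justifies the weak formulation including $t=0$.
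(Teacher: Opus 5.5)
Your proposal is correct and follows essentially the same route as the paper: use $\nabla_W F_f[\mu]=\nabla f$, test the flow against $\varphi\in C_c^\infty(\mathbb{R}^{3d})$, and match $\frac{\mathrm{d}}{\mathrm{d}t}\varphi(z(t))=\nabla\varphi(z(t))\cdot\dot z(t)$ with the three transport terms via the Adam ODE. Your extra care at $t=0$ (boundedness of $g_t$ along the Barakat--Bianchi solution) and your check of the initial datum are details the paper's proof omits, but they do not change the argument.
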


\begin{proposition}\label{prop2}
If $x(t)$ converges to a critical point of an objective function $f$, then the trajectory of the Dirac measure $\delta_{x(t)}$ converges to a critical point of the objective functional $F_f[\mu]=\int_{\mathbb{R}^d} f(x)\mathrm{d}\mu(x)$. 
\end{proposition}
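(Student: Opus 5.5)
The argument has two parts: identify the Wasserstein gradient of the potential-energy functional at a Dirac measure, then read off what criticality means for a measure supported at a single point. The ingredients are Table~\ref{tab:wasserstein_grad} and the definition of a critical point given in the Preliminaries.

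First, from Table~\ref{tab:wasserstein_grad}, the potential energy $F_f[\mu]=\int_{\mathbb{R}^d}f(x)\mathrm{d}\mu(x)$ has first variation $\delta F_f[\mu]=f$ for every $\mu$. Hence $\nabla_W F_f[\mu](x)=\nabla f(x)$, and this gradient does not depend on $\mu$.

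Second, apply this to $\mu_t=\delta_{x(t)}$. Its support is the single point $\{x(t)\}$, so the Wasserstein gradient on the support is the single vector $\nabla f(x(t))$. Let $x_*=\lim_{t\to\infty}x(t)$ be the critical point of $f$, so that $\nabla f(x_*)=0$. The natural candidate limit is $\mu_*:=\delta_{x_*}$, and I make two checks.

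\textbf{Convergence of the measures.} We have $W_2(\delta_{x(t)},\delta_{x_*})=\|x(t)-x_*\|$, because the only coupling of two Dirac measures is the product measure. This distance tends to $0$, so $\delta_{x(t)}\to\delta_{x_*}$ in $(\mathcal{P}_2(\mathbb{R}^d),W_2)$.

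\textbf{Criticality of the limit.} Since $\mathrm{supp}(\delta_{x_*})=\{x_*\}$ and $\nabla_W F_f[\delta_{x_*}](x_*)=\nabla f(x_*)=0$, the condition "$\nabla_W F_f[\mu_*](x)=0$ for $\mu_*$-a.e.\ $x\in\mathrm{supp}(\mu_*)$" holds. So $\delta_{x_*}$ is a critical point of $F_f$. If desired, continuity of $\nabla f$ also gives $\nabla_W F_f[\delta_{x(t)}](x(t))\to 0$, so the gradient evaluated along the trajectory vanishes in the limit.

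The main obstacle is interpretive rather than computational. "Almost everywhere on the support" must be read with respect to the measure $\mu_*$ itself. For a Dirac measure the point $x_*$ carries full mass, so the condition genuinely forces $\nabla f(x_*)=0$ and is not vacuous. I also need to fix the topology in which "converges" is meant; I take it to be $W_2$, and the product-coupling identity above makes this immediate. Finally, if the statement is intended for the full AdamFlow state $\rho_t=\delta_{(x(t),m(t),v(t))}$ from Proposition~\ref{prop1}, the same argument applies to its $x$-marginal $\rho_t^X=\delta_{x(t)}$.
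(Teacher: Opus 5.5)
Your proposal is correct and takes essentially the same route as the paper: you identify $\nabla_W F_f=\nabla f$, take $\delta_{x_*}$ as the limit, and check criticality through $\nabla f(x_*)=0$, which the paper writes as $\int\|\nabla f(x)\|\,\mathrm{d}\delta_{x_*}(x)=\|\nabla f(x_*)\|=0$. Your explicit computation $W_2(\delta_{x(t)},\delta_{x_*})=\|x(t)-x_*\|$ justifies the convergence $\delta_{x(t)}\to\delta_{x_*}$, which the paper asserts without proof.
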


Proposition \ref{prop1} and \ref{prop2} indicate the equivalence between the Adam ODE (\ref{eq:adam_ode}) and AdamFlow (\ref{eq:adamflow}), when the objective functional is defined as a potential energy $F_f[\mu]$ with potential function $f$. Therefore, AdamFlow can be viewed as a generalisation of the Adam ODE system (\ref{eq:adam_ode}) from the Euclidean space to the space of probability measures, while the objective is extended from the potential energy $F_f$ to an arbitrary functional $F$.

\subsection{Convergence analysis}
In this section, we derive the asymptotic convergence of AdamFlow (\ref{eq:adamflow}). Instead of finding a global minimum for a convex distributional optimisation problem \citep{chen2025accelerating}, we show that the solutions of AdamFlow will converge to the set of critical points of the objective functional without the requirement of convexity.

\begin{theorem}\label{thm1}
Assume the objective functional is lower bounded by $F[\mu]\geq\underline{F}$. Let $\rho_t\in\mathcal{P}_2(\mathbb{R}^d\times\mathbb{R}^d\times\mathbb{R}^d)$ be a solution of AdamFlow (\ref{eq:adamflow}) with $x$-marginal $\mu_t$ and initial condition $\rho_0(x,m,v)=\mu_0(x)\delta_{(0,0)}(m,v)$. If $\rho_t$ is absolute continuous and the parameters satisfy $4\alpha-\beta<3$, then every limit point of the curve $\mu_t$ is a critical point of the objective functional $F[\mu]$. More specifically, let $\omega(\mu_0)$ denote an $\omega$-limit set that contains all limit points of $\mu_t$:
\begin{equation}\label{eq:omega_set}
\omega(\mu_0):=\left\{\mu_{\infty}~|~\mu_{\infty}=\rho_{\infty}^X,~\exists t_n\rightarrow\infty\mbox{ such that }\rho_{t_n}\rightarrow\rho_{\infty}\right\}.
\end{equation}
Then $\omega(\mu_0)$ is a subset of the set of critical points of $F[\mu]$, i.e., 
$\omega(\mu_0)\subseteq\left\{\mu_*|\nabla_W F[\mu_*](x)=0,~\mbox{a.e.}~ x\in\mathrm{supp}(\mu_*)\right\}$.
\end{theorem}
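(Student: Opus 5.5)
The plan is to adapt the Lyapunov-function argument that \citet{barakat2021convergence} use for the Adam ODE to the measure-valued setting. We integrate the energy over the particle density $\rho_t$ and then use a LaSalle-type invariance argument on the $\omega$-limit set $\omega(\mu_0)$.

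\textbf{Lyapunov functional.} For a curve $\rho_t$ with $x$-marginal $\mu_t$, define
\begin{equation*}
E(t)=F[\mu_t]+\frac{\eta}{2}\int_{\mathbb{R}^{3d}}\Big\langle \frac{m^2}{(1-\alpha)\,a(t)\,(\sqrt{v/b(t)}+\epsilon)},\mathbf{1}\Big\rangle\,\mathrm{d}\rho_t(x,m,v),
\end{equation*}
where $a(t)=1-e^{-(1-\alpha)t}$ and $b(t)=1-e^{-(1-\beta)t}$. This is the pointwise Adam energy of \citet{barakat2021convergence}, averaged against $\rho_t$.

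\textbf{Step 1: differentiate $E$.}
\begin{itemize}[leftmargin=*]
\item Since $\rho_t$ is absolutely continuous, we may test the weak form of (\ref{eq:adamflow}) against smooth functions of $(x,m,v)$.
\item By the chain rule in Wasserstein space, with $\nabla_W F[\mu_t]=\nabla\delta F[\mu_t]$,
\begin{equation*}
\frac{\mathrm{d}}{\mathrm{d}t}F[\mu_t]=-\eta\int\langle\nabla_W F[\mu_t](x),g_t(m,v)\rangle\,\mathrm{d}\rho_t.
\end{equation*}
\item Differentiating the kinetic part along the $m$ and $v$ transport terms gives a cross term $+\eta\int\langle\nabla_W F[\mu_t],g_t\rangle\,\mathrm{d}\rho_t$. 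It cancels the first term exactly, which is the reason for choosing this weight.
\item What remains are the damping term $-\eta\int\langle m^2,(a(\sqrt{v/b}+\epsilon))^{-1}\rangle\,\mathrm{d}\rho_t$ and terms coming from $\dot a$, $\dot b$ and $\dot v$.
\item Following the pointwise computation in \citet{barakat2021convergence}, the condition $4\alpha-\beta<3$, i.e.\ $4(1-\alpha)>1-\beta$, lets the damping absorb the $\dot v$ contribution. Because $a$ and $b$ are increasing, the remaining time-dependent terms are either nonpositive or integrable on $[t_0,\infty)$.
\end{itemize}
The outcome I aim for is
\begin{equation*}
\frac{\mathrm{d}}{\mathrm{d}t}E(t)\le -c\int\|m\|^2_{w_t}\,\mathrm{d}\rho_t+r(t),
\end{equation*}
where $c>0$, $r\in L^1(t_0,\infty)$, and $\|m\|_{w_t}$ is the norm weighted by $(\sqrt{v/b}+\epsilon)^{-1}$. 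Near $t=0$ the division by $a(t)$ causes a singularity. I handle it using $m=O(t)$, which follows from the initial condition $\delta_{(0,0)}$.

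\textbf{Step 2: integrability.} Since $E\ge\underline{F}$, the bound gives $\int_{t_0}^\infty\int\|m\|^2_{w_t}\,\mathrm{d}\rho_t\,\mathrm{d}t<\infty$. Moreover $E(t)$ converges.

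\textbf{Step 3: invariance on the limit set.} Take $\rho_{t_n}\to\rho_\infty$ and consider the shifted curves $\rho_{t_n+s}$ for $s\in[0,1]$.
\begin{itemize}[leftmargin=*]
\item Use the uniform moment bounds implied by $\rho_t\in\mathcal{P}_2$, together with the boundedness of the velocity field: $|g_t|\le\|m\|/\epsilon$ and $\nabla_W F$ is bounded on the relevant sets.
\item By an Arzel\`a--Ascoli argument in $W_2$, a subsequence of the shifted curves converges to a solution $\bar\rho_s$ of the autonomous limit PDE, where $a=b=1$, with $\bar\rho_0=\rho_\infty$.
\item By Step 2, $\int_0^1\int\|m\|^2\,\mathrm{d}\bar\rho_s\,\mathrm{d}s=0$. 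Hence $m=0$ $\bar\rho_s$-a.e.\ for a.e.\ $s$.
\item Then $x$ is stationary along the limit curve, and the $m$-equation forces $\partial_t m=(1-\alpha)\nabla_W F[\bar\mu_s](x)=0$ a.e.
\item Therefore $\nabla_W F[\mu_\infty](x)=0$ for $\mu_\infty$-a.e.\ $x$, i.e.\ on $\mathrm{supp}(\mu_\infty)$.
\end{itemize}
This proves $\omega(\mu_0)\subseteq\{\mu_*\,|\,\nabla_W F[\mu_*]=0 \mbox{ a.e. on } \mathrm{supp}(\mu_*)\}$.

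\textbf{Main obstacle.} I expect two steps to be hardest. The first is the sign computation in Step 1, i.e.\ checking that $4\alpha-\beta<3$ really controls the $v$-dependent cross terms once everything is integrated against $\rho_t$. The second is justifying the passage to the limit in Step 3. That requires continuity of $\mu\mapsto\nabla_W F[\mu]$ along the converging sequence, which I would assume implicitly as part of the regularity behind ``$\rho_t$ is absolutely continuous.''
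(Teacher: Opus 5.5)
Your Lyapunov functional is exactly the paper's $\mathcal{E}_t[\rho_t]=F[\mu_t]+\frac{\eta}{2(1-\alpha)}\int\langle m,g_t(m,v)\rangle\,\mathrm{d}\rho_t$, and your Steps 1--2 match its computation. The cross term cancels. The $v$-transport contributes at most $\frac{1-\beta}{4(1-\alpha)}$ times the damping $\eta\int\langle m,g_t\rangle\,\mathrm{d}\rho_t$, which is where $4\alpha-\beta<3$ enters. The bias-correction terms carry the factors $(e^{(1-\alpha)t}-1)^{-1}$ and $(e^{(1-\beta)t}-1)^{-1}$. The positive $\dot b$ term is a decaying multiple of the dissipation, not an additive $L^1$ remainder. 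The paper absorbs it into the damping for $t\ge T$, so $r\equiv0$ there and your $t\to0$ analysis is unnecessary.

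The genuine difference is the final step. The paper upgrades the integrated dissipation to $\int\|m\|^2\,\mathrm{d}\rho_t\to0$ pointwise in time. It then tests the $m$-equation with $\varphi(m)$ at the times $t_n$ and asserts that $\frac{\mathrm{d}}{\mathrm{d}t}\int\varphi(m)\,\mathrm{d}\rho_t\to0$ along a further subsequence. That route is shorter and builds no limiting dynamics, but it has three weak points:
\begin{enumerate}
\item The pointwise upgrade needs a Barbalat-type uniform continuity, which the paper attributes to ``absolute continuity''.
\item The subsequence claim does not follow from boundedness of the time integral of that derivative.
\item Since $\rho_\infty$ is concentrated on $\{m=0\}$, test functions of $m$ alone only give $\nabla\varphi(0)\cdot\int\nabla_W F[\mu_\infty]\,\mathrm{d}\mu_\infty=0$, which is a zero-mean condition, not the pointwise one.
\end{enumerate}

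Your LaSalle argument on the windows $[t_n,t_n+1]$ avoids all three. It needs only the time-integrated dissipation; lower semicontinuity of the weighted integrand $\sum_i m_i^2/(\sqrt{v_i}+\epsilon)$ under narrow convergence suffices, so you do not need the unweighted $\|m\|^2$. Your ``$x$ stationary, $m\equiv0$'' reasoning gives the a.e.\ conclusion once you formalise it by testing the limit equation with $x$-dependent functions such as $\psi(x)\cdot m$, cut off in $m$ and $v$. This yields $\int\nabla_W F[\mu_\infty]\cdot\psi\,\mathrm{d}\mu_\infty=0$ for all $\psi\in C_c^\infty(\mathbb{R}^d;\mathbb{R}^d)$. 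Test functions of $m$ alone would not suffice.

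The price of your route is compactness of the shifted curves. You need:
\begin{itemize}
\item uniform-in-time second-moment bounds;
\item $L^2(\rho_t)$ bounds on the full velocity field, including $(\nabla_W F)^2$;
\item continuity of $\mu\mapsto\nabla_W F[\mu]$, to identify the autonomous limit equation.
\end{itemize}
These do not follow from $\rho_t\in\mathcal{P}_2$ for each $t$, as you suggest, and should be stated as hypotheses. The paper relies silently on comparable assumptions (``tightness'', and passing $\nabla_W F[\mu_{t_n}]$ to the limit).
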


The detailed proof of Theorem \ref{thm1} is provided in Appendix \ref{appendix_A3} using Lyapunov approach \citep{barakat2021convergence,chen2025accelerating}. Theorem \ref{thm1} demonstrates the asymptotic convergence of the solution of AdamFlow (\ref{eq:adamflow}), while the convergence rate of AdamFlow could be further quantified by introducing additional assumptions such as convexity or Łojasiewicz property of the objective functional.

\subsection{Numerical implementation}
AdamFlow defines a flow PDE (\ref{eq:adamflow}) which can be solved numerically by the JKO scheme \citep{jordan1998jko}, numerical PDE solvers \citep{ames2014numerical}, or particle-based approximation \citep{liu2016stein,liu2019particle,chewi2024sot,chen2025accelerating}. However, both JKO scheme and numerical PDE solvers are computationally expensive. The JKO scheme needs to solve a sequence of distributional optimisation problems to compute the entire trajectory of $\rho_t$, and numerical PDE solvers require discretisation over the spatio-temporal domain $\mathbb{R}_+\times\mathbb{R}^d\times\mathbb{R}^d\times\mathbb{R}^d$.

Therefore, in line with the Hamiltonian flow \citep{chen2025accelerating}, we adopt particle-based method to solve AdamFlow (\ref{eq:adamflow}) and approximate the solution $\rho_t$ by an empirical measure $\hat{\rho}_t$ with $N$ particles $(X^i,M^i,V^i)$:
\begin{equation}\label{eq:empirical}
\rho_t\approx\hat{\rho}_t:=\frac{1}{N}\sum_{i=1}^N\delta_{(X^i(t),M^i(t),V^i(t))}.
\end{equation}
Its $x$-marginal is defined by $\hat{\mu}_t=\hat{\rho}_t^X:=\frac{1}{N}\sum_{i=1}^N\delta_{X^i(t)}$. Such a particle-based approximation of AdamFlow leads to the following result:
\begin{proposition}\label{prop3}
Suppose the solution $\rho_t$ of AdamFlow (\ref{eq:adamflow}) is approximated by the empirical measure $\hat{\rho}_t$ defined in (\ref{eq:empirical}). Then, the particle $(X^i,M^i,V^i)$ evolves according to the following ODE system:
\begin{equation}\label{eq:adamflow_particle}
\left\{\begin{aligned}
\dot{M}^i & = (1-\alpha)\left(\nabla_W F[\hat{\mu}_t](X^i)-M^i\right), \\
\dot{V}^i & = (1-\beta)\left((\nabla_W F[\hat{\mu}_t](X^i))^2-V^i\right), \\
\dot{X}^i & = -\eta g_t(M^i,V^i),
\end{aligned} \right.
\end{equation}
for $i=1,...,N$, where $(X^i(0),M^i(0),V^i(0))=(X_0^i,0,0)$ is the initial value of the particle.
\end{proposition}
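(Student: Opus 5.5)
We will verify Proposition~\ref{prop3} by substituting the empirical measure $\hat{\rho}_t$ into the weak (distributional) formulation of the AdamFlow PDE (\ref{eq:adamflow}). The strategy mirrors the argument behind Proposition~\ref{prop1}. There the Dirac trajectory of a single Adam ODE solution was shown to solve AdamFlow. The only new feature is that the Wasserstein gradient is now evaluated at the empirical $x$-marginal $\hat{\mu}_t$, which is common to all particles. With $\hat{\mu}_t$ fixed along the flow, the PDE becomes linear in $\rho_t$. A mixture of Diracs therefore solves it exactly when each Dirac is transported along the characteristics of the corresponding velocity field.

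\textbf{Step 1: weak form.} Fix a test function $\varphi\in C_c^\infty(\mathbb{R}^{3d})$. Integrating by parts in (\ref{eq:adamflow}) gives
\begin{equation*}
\begin{split}
\frac{\mathrm{d}}{\mathrm{d}t}\int\varphi\,\mathrm{d}\rho_t
=\int\Big(&\nabla_m\varphi\cdot(1-\alpha)(\nabla_W F[\mu_t](x)-m)\\
&+\nabla_v\varphi\cdot(1-\beta)((\nabla_W F[\mu_t](x))^2-v)\\
&-\nabla_x\varphi\cdot\eta g_t(m,v)\Big)\mathrm{d}\rho_t(x,m,v).
\end{split}
\end{equation*}
We will take this identity as the meaning of ``solution'' for measure-valued $\rho_t$.

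\textbf{Step 2: plug in $\hat{\rho}_t$.} The left-hand side becomes $\frac{1}{N}\sum_i\frac{\mathrm{d}}{\mathrm{d}t}\varphi(X^i,M^i,V^i)$. By the chain rule this equals
\begin{equation*}
\frac{1}{N}\sum_i\big(\nabla_x\varphi\cdot\dot X^i+\nabla_m\varphi\cdot\dot M^i+\nabla_v\varphi\cdot\dot V^i\big),
\end{equation*}
with all gradients of $\varphi$ evaluated at $(X^i,M^i,V^i)$. The right-hand side becomes the same sum, but with $\dot X^i,\dot M^i,\dot V^i$ replaced by the velocity field evaluated at $(X^i,M^i,V^i)$, using $\mu_t=\hat{\mu}_t$.

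\textbf{Step 3: match terms.} If the particles obey (\ref{eq:adamflow_particle}), the two sides agree term by term for every $\varphi$, so $\hat{\rho}_t$ is a weak solution. Conversely, if $\hat{\rho}_t$ solves the weak form and the particles are distinct, we localize the test functions around each particle. We choose $\varphi$ supported near a single point $(X^i,M^i,V^i)$ and equal to a coordinate function there. This forces each component of $(\dot X^i,\dot M^i,\dot V^i)$ to equal the velocity field, which is exactly (\ref{eq:adamflow_particle}). The initial condition $\rho_0=\mu_0\delta_{(0,0)}$ approximated empirically gives $(X^i(0),M^i(0),V^i(0))=(X^i_0,0,0)$.

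\textbf{Main obstacle.} The delicate point is justifying that $\nabla_W F[\hat{\mu}_t]$ is well defined at the atoms $X^i$. For the objectives used here (sliced Wasserstein, Chamfer, Laplacian), the formulas (\ref{eq:sliced_wasserstein_grad}), (\ref{eq:chamfer_grad}), (\ref{eq:laplacian_grad}) give pointwise values on $\mathrm{supp}(\hat{\mu}_t)$, so we will simply take these as the definition at discrete measures. The converse direction also needs the localization argument, which fails when particles coincide. We will handle this by stating the result as ``the ODE system yields a solution,'' which is the sense the paper needs. Finally, $g_t$ is singular at $t=0$ only through the bias-correction factors. As in the Adam ODE, this is handled by noting that $m/(1-e^{-(1-\alpha)t})$ stays bounded as $t\to0$ since $m(0)=0$, and similarly for $v$; we will invoke the existence result of \citet{barakat2021convergence} for that.
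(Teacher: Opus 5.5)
Your proposal is correct and follows essentially the same route as the paper. Both substitute $\hat{\rho}_t$ into the weak formulation of (\ref{eq:adamflow}), use the chain rule on the left and integration by parts on the right, and match the coefficients of $\nabla_x\varphi$, $\nabla_m\varphi$, $\nabla_v\varphi$ particle by particle. Your localization argument for distinct particles makes explicit the step the paper only asserts when it deduces the ODE system from the identity holding for arbitrary $\varphi$.
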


The particle dynamical system (\ref{eq:adamflow_particle}) can be integrated by any ODE solvers. For simplicity, we employ the Euler method with step size $h$. The final numerical algorithm of AdamFlow is expressed as:
\begin{equation}\label{eq:adamflow_euler}
\left\{
\begin{aligned}
M^i_{k+1} & = M^i_{k}+h(1-\alpha)\left(\nabla_W F[\hat{\mu}_{k}](X^i_{k})-M^i_{k}\right), \\
V^i_{k+1} & = V^i_{k}+h(1-\beta)\left((\nabla_W F[\hat{\mu}_{k}](X^i_{k}))^2-V^i_{k}\right),\\
X^i_{k+1} & = X^i_{k}-h\eta g_{h(k+1)}(M^i_{k+1},V^i_{k+1}),
\end{aligned} \right.
\end{equation}
for $i=1,...,N$ and $k=0,...,K-1$, where the empirical measure $\hat{\mu}_k:=\frac{1}{N}\sum_{i=1}^N\delta_{X^i_k}$. The particle-based AdamFlow algorithm (\ref{eq:adamflow_euler}) is computationally efficient since all particles can be updated in parallel.

\begin{algorithm}[h]
\caption{AdamFlow for affine surface registration}\label{alg:affine}
\begin{algorithmic}[1] 
\State \textbf{Input:} 
source mesh $\mu_0=\mu^{\mathcal{M}_0}$ with vertices $q^i_0\in\mathcal{V}_0$, target mesh $\mu_*=\mu^{\mathcal{M}_*}$, number of vertices $N=|\mathcal{V}_0|$, integration steps $K$, step size $h$, number of projects $L$, decay rates $\alpha,\beta\in[0,1)$, small number $\epsilon>0$, learning rate $\eta$, initial affine matrix $A_0=I_{3\times 3}$, initial translation vector $b_0=\mathbf{0}_{3\times 1}$, initial moments $m_0^{A}=v_0^{A}=\mathbf{0}_{3\times 3}$ and $m_0^{b}=v_0^{b}=\mathbf{0}_{3\times 1}$.
\For{$k=0,...,K-1$} 
\State Sample points on the fixed mesh $x_*^1,...,x_*^{N}\sim\mu_*$
\State $x^i_k=A_kq^i_0+b_k$ for $i=1,...,N$
\State $\hat{\mu}_k=\frac{1}{N}\sum_{i=1}^{N}\delta_{x^i_k}$ and $\hat{\mu}_*=\frac{1}{N}\sum_{i=1}^{N}\delta_{x^i_*}$

\State Sample projection directions $\theta_1,...,\theta_L\sim\mathbb{S}^2$
\State Compute $\nabla_W F[\hat{\mu}_k]=\frac{1}{2}\nabla_W SW_2^2(\hat{\mu}_k, \hat{\mu}_*)$

\State $\mathrm{grad}~A_k=\frac{1}{N}\sum_{i=1}^N \nabla_W F[\hat{\mu}_k](x^i_k)(q^i_0)^{\top} $
\State $\mathrm{grad}~b_k=\frac{1}{N}\sum_{i=1}^N \nabla_W F[\hat{\mu}_k](x^i_k)$

\State $m^A_{k+1} = m^A_{k}+h(1-\alpha)\left(\mathrm{grad}~A_k-m^A_{k}\right)$
\State $m^b_{k+1} = m^b_{k}+h(1-\alpha)\left(\mathrm{grad}~b_k-m^b_{k}\right)$
\State $v^A_{k+1} = v^A_{k}+h(1-\beta)\left((\mathrm{grad}~A_k)^2-v^A_{k}\right)$
\State $v^b_{k+1} = v^b_{k}+h(1-\beta)\left((\mathrm{grad}~b_k)^2-v^b_{k}\right)$
\State $A_{k+1} = A_{k}-h\eta g_{h(k+1)}(m^A_{k+1},v^A_{k+1})$
\State $b_{k+1} = b_{k}-h\eta g_{h(k+1)}(m^b_{k+1},v^b_{k+1})$
\EndFor

\State \textbf{Return:} $A_K$ and $b_K$.
\end{algorithmic}
\end{algorithm}

\begin{algorithm}[h]
\caption{AdamFlow for non-rigid surface registration}\label{alg:nonrigid}
\begin{algorithmic}[1] 
\State \textbf{Input:} 
source mesh $\mu_0=\mu^{\mathcal{M}_0}$ with vertices $q^i_0\in\mathcal{V}_0$, target mesh $\mu_*=\mu^{\mathcal{M}_*}$, number of vertices $N=|\mathcal{V}_0|$, integration steps $K_{\mathrm{sw}}$ and $K_{\mathrm{cham}}$, step size $h$, number of projects $L$, regularisation weight $\lambda_{\mathrm{lap}}$, decay rates $\alpha,\beta\in[0,1)$, small number $\epsilon>0$, learning rate $\eta_{\mathrm{sw}}$ and $\eta_{\mathrm{cham}}$, initial displacement $\Delta q^i_0=0$, initial moments $m_0^i=v_0^i=0$.

\State $x^i_0 = q^i_0+\Delta q^i_0$ for $i=1,...,N$
\State $\lambda_{\mathrm{sw}}=1$, $\lambda_{\mathrm{cham}}=0$, $
\eta=\eta_{\mathrm{sw}}$, $K=K_{\mathrm{sw}}+K_{\mathrm{cham}}$
\For{$k=0,...,K-1$} 
\If{$k=K_{\mathrm{sw}}$}
\State $\lambda_{\mathrm{sw}}=0$, $\lambda_{\mathrm{cham}}=1$, $\eta=\eta_{\mathrm{cham}}$, $m_k^i=v_k^i=0$
\EndIf
\State Sample points on the fixed mesh $x_*^1,...,x_*^{N}\sim\mu_*$
\State $\hat{\mu}_k=\frac{1}{N}\sum_{i=1}^{N}\delta_{x^i_k}$ and $\hat{\mu}_*=\frac{1}{N}\sum_{i=1}^{N}\delta_{x^i_*}$

\If{$k<K_{\mathrm{sw}}$}
\State Sample projection directions $\theta_1,...,\theta_L\sim\mathbb{S}^2$
\EndIf 
\State Compute $\nabla_W F[\hat{\mu}_k]$ defined in Eq.~(\ref{eq:nonrigid_registration})

\State $m^i_{k+1} = m^i_{k}+h(1-\alpha)\left(\nabla_W F[\hat{\mu}_{k}](x^i_{k})-m^i_{k}\right)$
\State $v^i_{k+1} = v^i_{k}+h(1-\beta)\left((\nabla_W F[\hat{\mu}_{k}](x^i_{k}))^2-v^i_{k}\right)$
\State $x^i_{k+1} = x^i_{k}-h\eta g_{h(k+1)}(m^i_{k+1},v^i_{k+1})$
\EndFor
\State \textbf{Return:} $\Delta q^i_K=x^i_K-q^i_0$ for $i=1,...,N$.
\end{algorithmic}
\end{algorithm}

\subsection{AdamFlow for surface registration}
The particle-based implementation of the AdamFlow enables us to effectively solve both affine and non-rigid surface registration problems, defined in Eq.~(\ref{eq:affine_registration}) and (\ref{eq:nonrigid_registration}) respectively. We provide particle-based approximation for the source mesh $\mu^{\mathcal{M}_0}$ and the target mesh $\mu^{\mathcal{M}_*}$. The source mesh $\mu^{\mathcal{M}_0}$ is approximated by an empirical measure of its vertices as described in Eq.~(\ref{eq:mesh_empirical}). For the target mesh, we sample $N=|\mathcal{V}_0|$ points on the mesh for each time step, \emph{i.e.}, $x_*^1,...,x_*^{N}\sim\mu^{\mathcal{M}_*}$, and approximate $\mu^{\mathcal{M}_*}$ by an empirical measure $\hat{\mu}^{\mathcal{M}_*}=\frac{1}{N}\sum_{i=1}^{N}\delta_{x^i_*}$.

For affine surface registration, we initialise an identity affine matrix $A_0=I_{3\times 3}$ as a zero translation vector $b_0=\mathbf{0}_{3\times 1}$, and update the affine matrix and translation vector simultaneously based on the chain rule using AdamFlow. The detailed procedure is provided in Algorithm~\ref{alg:affine}. For non-rigid surface registration, we initialise the displacement of each vertex to $\Delta q^i_0=0$ and define $x^i_{k}=q^i_0+\Delta q^i_k$ for each time step $k$ and vertex $i=1,...,N$. All vertices are optimised in parallel. As discussed in Section \ref{sec:nonrigid}, we perform non-rigid surface registration in a coarse-to-fine manner. Specifically, we first optimise the SWD for $K_{\mathrm{sw}}$ steps for global surface alignment, and then we minimise the Chamfer distance for $K_{\mathrm{cham}}$ steps for local surface refinement. The optimisation scheme is summarised in Algorithm~\ref{alg:nonrigid}.

\section{Experiments}
In this section, we empirically verify the effectiveness of the sliced Wasserstein objective with AdamFlow optimisation for both affine and non-rigid surface registration tasks. Particularly, we demonstrate the superiority of the proposed objective functional and optimisation method:
\begin{itemize}[leftmargin=*]
\item \textit{Objective Functional.} The SWD significantly improves surface registration accuracy and computational efficiency compared to registration via local correspondence only.
\item \textit{Optimisation Method.} AdamFlow exhibits superior convergence property compared to accelerated WGF \citep{chen2025accelerating} for non-convex surface registration problems.
\end{itemize}

\subsection{Experiment settings}
\paragraph{Dataset.} To evaluate the performance of AdamFlow on surface registration problems, we consider three anatomical structures: the liver, the pancreas, and the left ventricle of the heart. The surface meshes of three types of organs are extracted from two datasets, AbdomenCT-1K \citep{ma2021abdomenct} and ImageCAS \citep{zeng2023imagecas}. For the AbdomenCT-1K dataset, we collect 360 ground-truth segmentation maps of abdominal CT scans that include the liver and pancreas. For the ImageCAS dataset, we employ TotalSegmentator v2.0 \citep{wasserthal2023totalsegmentator} to create cardiac four-chamber segmentations, including the left ventricle, for 500 coronary computed tomography angiography (CCTA) images. The surface meshes are extracted by the marching cubes algorithm \citep{lorensen1998marching} following Laplacian mesh smoothing. The dataset information and the resolution of extracted meshes are summarised in Table~\ref{tab:dataset}. For each organ, we randomly select 300 pairs of source and target meshes to evaluate the registration methods. A small set of 20 random pairs of meshes are used for hyperparameter selection.

\begin{table}
\setlength{\tabcolsep}{4pt}
\centering
\caption{Dataset information for surface registration. The number of meshes and the average numbers of vertices for the meshes of each anatomical structure are reported.}
\begin{tabular}{llcc}
\toprule
Dataset & Organ & \#Meshes & \#Vertices\\
\midrule
\multirow{2}{*}{AbdomenCT-1K} & Liver & 360 & $34,553\pm6,229$ \\
& Pancreas & 360 & $6,194\pm1,889$ \\
\midrule
ImageCAS & Left ventricle & 500 & $10,420\pm1,551$ \\
\bottomrule
\end{tabular}
\label{tab:dataset}
\end{table}

\begin{figure*}[h]
\centering
\begin{subfigure}{.49\textwidth}
  \centering
  \includegraphics[width=1.0\linewidth]{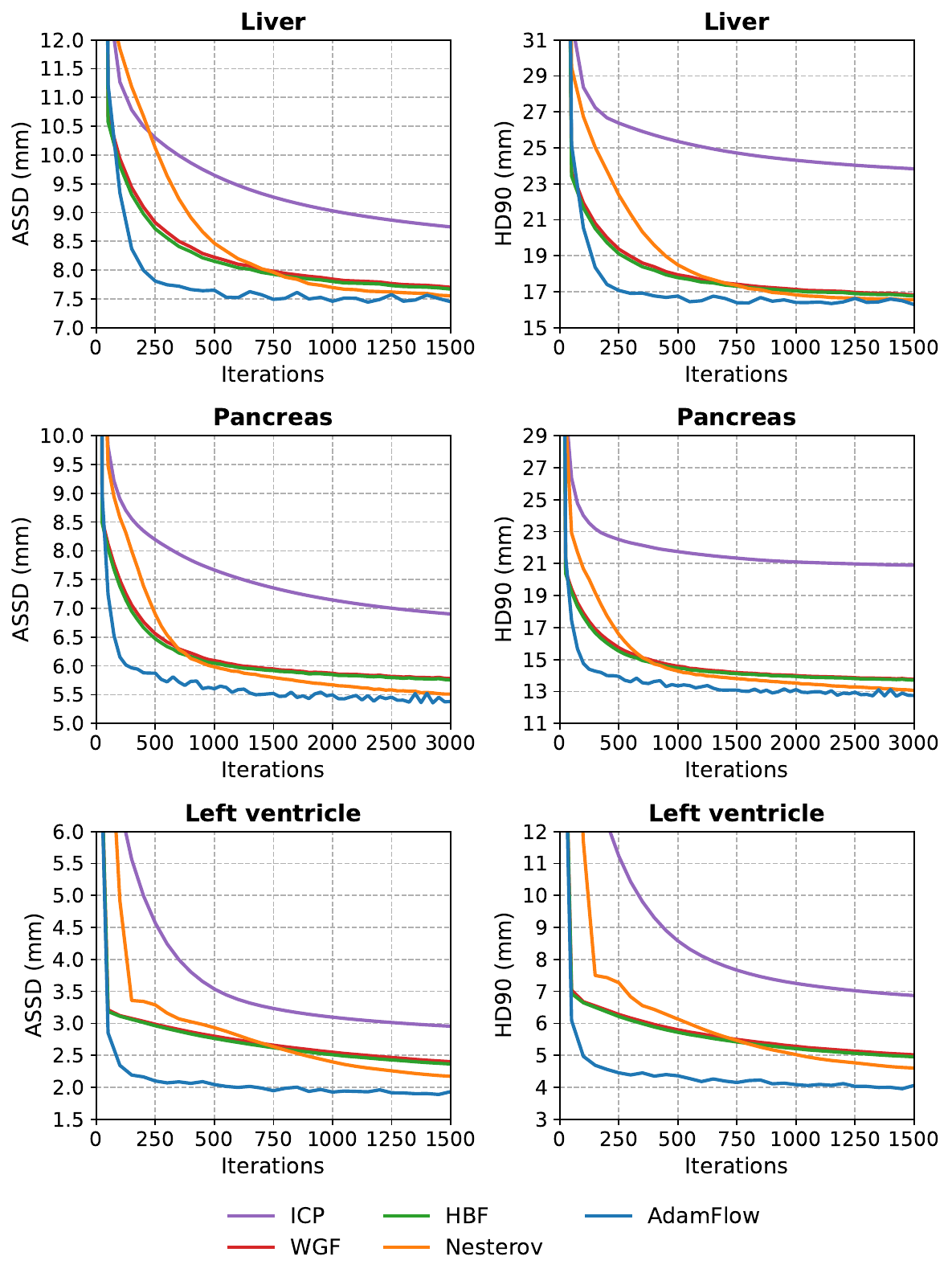}
  \caption{Comparative results for affine surface registration.}
  \label{fig:affine}
\end{subfigure}%
\hfill
\begin{subfigure}{.49\textwidth}
  \centering
  \includegraphics[width=1.0\linewidth]{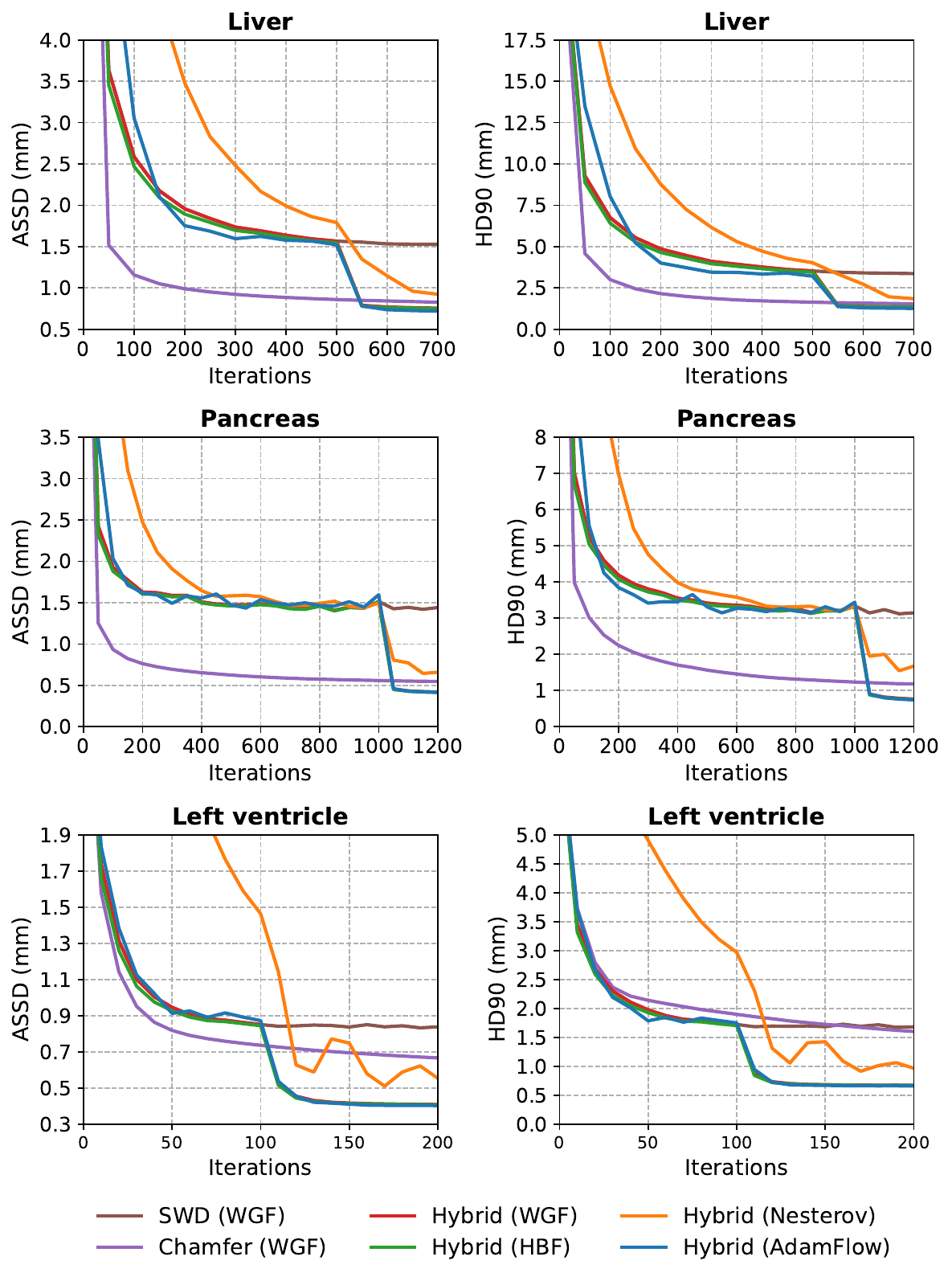}
  \caption{Comparative results for non-rigid surface registration.}
  \label{fig:nonrigid}
\end{subfigure}
\caption{Comparative results and convergence curves for affine and non-rigid surface registration. The registration errors are measured by ASSD (mm) and HD90 (mm). Registration errors decrease with increasing numbers of optimisation iterations.}
\end{figure*}

\paragraph{Comparison of objective functionals.} We compare across different objective functionals for surface registration problems, in terms of registration error and computational efficiency. For affine surface registration, we compare the SWD objective (\ref{eq:affine_registration}) with the ICP algorithm \citep{besl1992icp} that optimises local point correspondence (\ref{eq:icp}). For non-rigid surface registration, we compare the proposed hybrid objective functional (\ref{eq:nonrigid_registration}) against objectives based solely on the SWD (\ref{eq:sliced_wasserstein_mc}) or the Chamfer distance (\ref{eq:chamfer}). Note that registration methods such as the thin plate spline \citep{bookstein2002tps,chui2003tps} and the coherent point drift \citep{myronenko2006cpd,myronenko2010cpd} are not compared, due to their high computational complexity, \emph{i.e.}, $\mathcal{O}(N^3)$ for $N$ vertices.

\paragraph{Comparison of optimisation methods.}
For surface registration problems, we compare the AdamFlow optimisation method with other variants of WGF \citep{chen2025accelerating}, including the original WGF (\ref{eq:wgf}), the heavy-ball flow (HBF), and the Nesterov flow. Let $\rho_t:\mathbb{R}_+\rightarrow\mathcal{P}_2(\mathbb{R}^3\times\mathbb{R}^3)$ be a curve of probability measure, the HBF is defined as
\begin{equation}\label{eq:hbf}
\partial_t\rho_t=-\nabla_x\cdot(\rho_t m)+\nabla_m\cdot\left(\mu_t\left(am+\nabla_W F[\mu_t]\right)\right),
\end{equation}
where $m\in\mathbb{R}^3$ is the momentum term and the hyperparameter $a>0$ is set to 0.9 in our experiments. Similarly, the Nesterov flow is defined as
\begin{equation}\label{eq:nesterov}
\partial_t\rho_t=-\nabla_x\cdot(\rho_t m)+\nabla_m\cdot\left(\mu_t\left(3m/t+\nabla_W F[\mu_t]\right)\right).
\end{equation}
The exponentially convergent variational
acceleration flow proposed by \cite{chen2025accelerating} is not compared since it is numerically unstable.

\begin{table}
\setlength{\tabcolsep}{12pt}
\centering
\caption{The integration steps used for affine and non-rigid registration of different anatomical surfaces.}
\begin{tabular}{l|c|cc}
\toprule
& Affine & \multicolumn{2}{c}{Non-rigid} \\
Organ & $K_{\mathrm{affine}}$ & $K_{\mathrm{sw}}$ & $K_{\mathrm{cham}}$\\
\midrule
Liver & $1,500$ & $500$ & $200$ \\
Pancreas & $3,000$ & $1,200$ & $200$ \\
Left ventricle & $1,500$ & $100$ & $100$ \\
\bottomrule
\end{tabular}
\label{tab:integration_step}
\end{table}

\begin{table}
\setlength{\tabcolsep}{12pt}
\centering
\caption{The learning rates used in different optimisation methods for affine and non-rigid surface registration.}
\begin{tabular}{l|c|cc}
\toprule
& Affine & \multicolumn{2}{c}{Non-rigid} \\
Method & $\eta_{\mathrm{affine}}$ & $\eta_{\mathrm{sw}}$ & $\eta_{\mathrm{cham}}$\\
\midrule
ICP & $10^{-6}$ & -- & -- \\
WGF & $10^{-5}$ & $0.5$ & $0.1$ \\
HBF & $10^{-5}$ & $0.5$ & $0.1$ \\
Nesterov & $10^{-7}$ & $0.005$ & $0.005$ \\
AdamFlow & $10^{-2}$ & $0.5$ & $0.1$ \\
\bottomrule
\end{tabular}
\label{tab:learning_rate}
\end{table}

\paragraph{Evaluation metrics.} 
To evaluate the performance of surface registration, we measure the absolute symmetric surface distance (ASSD) and 90th percentile of Hausdorff distance (HD90) \citep{cruz2021deepcsr} between the transformed and target meshes. We randomly sample 50,000 points on both meshes and compute their distances as registration errors. To measure the computational efficiency and runtime of the algorithms, all optimisation algorithms are implemented using PyTorch \citep{paszke2019pytorch} and PyTorch3D \citep{ravi2020pytorch3d} with GPU acceleration. All experiments are conducted on an 8-core Intel Core i7-11700K CPU and a NVIDIA GeForce RTX 3080 GPU with 12GB memory.

\paragraph{Hyperparameters.} We provide the hyperparameters of objective functionals and optimisation methods used in comparative experiments. For the SWD, the number of Monte Carlo projections is set to $L=4$ for efficiency. The mesh Laplacian regularisation weight in the non-rigid surface registration problem (\ref{eq:nonrigid_registration}) is set to $\lambda_{\mathrm{lap}}=2.0$. For AdamFlow (\ref{eq:adamflow}), we let $\epsilon=10^{-10}$ and decay rates $\alpha=0.9$, $\beta=0.95$. For AdamFlow and all baseline methods, we use the forward Euler method to integrate the particle-based ODEs with the same step size $h=1$. As reported in Table~\ref{tab:integration_step}, we adopt different integration steps $K$ for different anatomical structures due to geometric variations. To ensure generalisability and fair comparison, for each optimisation algorithm, we use the same learning rate for all three types of organs as listed in Table~\ref{tab:learning_rate}. The hyperparameter selection experiments and detailed discussions are provided in Section \ref{sec:hyperparameter}.

\begin{table*}
\caption{Comparison of AdamFlow to different optimisation methods for affine surface registration. The registration errors are measured by ASSD (mm) and HD90 (mm). The best results are bolded. The symbol * indicates that AdamFlow achieves significantly better performance than all baseline methods (paired $t$-test, $p<0.05$).}
\begin{tabular}{l|ccc|ccc}
\toprule
& \multicolumn{3}{c|}{ASSD (mm) $\downarrow$} & \multicolumn{3}{c}{HD90 (mm) $\downarrow$} \\
Method & Liver & Pancreas & Left ventricle & Liver & Pancreas & Left ventricle \\
\midrule
ICP &
$8.751\pm3.118$ & $6.902\pm3.453$ & $2.955\pm0.956$ & 
$23.83\pm13.48$ & $20.90\pm15.25$ & $6.870\pm3.382$ \\
WGF &
$7.701\pm1.953$ & $5.780\pm1.724$ & $2.401\pm0.650$ & 
$16.82\pm4.715$ & $13.77\pm4.951$ & $5.016\pm1.764$ \\
HBF &
$7.670\pm1.947$ & $5.757\pm1.715$ & $2.365\pm0.644$ & 
$16.76\pm4.710$ & $13.71\pm4.926$ & $4.953\pm1.758$ \\
Nesterov &
$7.555\pm1.926$ & $5.511\pm1.649$ & $2.172\pm0.631$ & 
$16.55\pm4.691$ & $13.07\pm4.662$ & $4.602\pm1.733$\\
AdamFlow &
$~\mathbf{7.452}\pm\mathbf{1.880}^*$ & $~\mathbf{5.381}\pm\mathbf{1.451}^*$ & $~\mathbf{1.932}\pm\mathbf{0.529}^*$ &  
$~\mathbf{16.28}\pm\mathbf{4.597}^*$ & $~\mathbf{12.75}\pm\mathbf{4.289}^*$ & $~\mathbf{4.059}\pm\mathbf{1.354}^*$ \\
\bottomrule
\end{tabular}
\label{tab:affine}
\end{table*}

\begin{table*}
\setlength{\tabcolsep}{4pt}
\caption{Comparison of different objective functionals and optimisation methods for non-rigid surface registration. The registration errors are measured by ASSD (mm) and HD90 (mm). The best results are bolded. The symbol * indicates that AdamFlow achieves significantly better performance than all baseline methods (paired $t$-test, $p<0.05$).}
\begin{tabular}{ll|ccc|ccc}
\toprule
& & \multicolumn{3}{c|}{ASSD (mm) $\downarrow$} & \multicolumn{3}{c}{HD90 (mm) $\downarrow$} \\
Objective & Method & Liver & Pancreas & Left ventricle & Liver & Pancreas & Left ventricle \\
\midrule
SWD & WGF &
$1.528\pm0.394$ & $1.441\pm0.487$ & $0.838\pm0.172$ &   $3.369\pm1.550$ & $3.148\pm1.427$ & $1.680\pm0.471$\\
Chamfer & WGF &
$0.827\pm0.264$ & $0.548\pm0.211$ & $0.666\pm0.324$ & 
$1.537\pm1.362$ & $1.184\pm0.959$ & $1.600\pm1.102$  \\
Hybrid & WGF &
$0.755\pm0.192$ & $0.419\pm0.144$ & $0.408\pm0.052$ & 
$1.346\pm1.101$ & $0.763\pm0.596$ & $0.670\pm0.157$ \\
Hybrid & HBF &
$0.749\pm0.189$ & $0.416\pm0.141$ & $0.407\pm0.051$ & 
$1.337\pm1.104$ & $0.747\pm0.558$ & $0.668\pm0.152$\\
Hybrid & Nesterov &
$0.923\pm0.243$ & $0.659\pm0.874$ & $0.553\pm0.091$ &
$1.848\pm1.151$ & $1.674\pm4.632$ & $0.965\pm0.280$ \\
Hybrid & AdamFlow &
$~\mathbf{0.720}\pm\mathbf{0.145}^*$ & $\mathbf{0.415}\pm\mathbf{0.144}$ & $~\mathbf{0.402}\pm\mathbf{0.049}^*$ & 
$~\mathbf{1.264}\pm\mathbf{0.833}^*$ & $\mathbf{0.742}\pm\mathbf{0.553}$ & $~\mathbf{0.658}\pm\mathbf{0.113}^*$ \\
\bottomrule
\end{tabular}
\label{tab:nonrigid}
\end{table*}

\subsection{Comparative results}
\paragraph{Affine surface registration.} For affine surface registration, we run AdamFlow following the procedure in Algorithm~\ref{alg:affine}. The comparative results are presented in Table~\ref{tab:affine}, and the optimisation curves are depicted in Figure~\ref{fig:affine}. Table~\ref{tab:affine} shows that for all three anatomical structures, our AdamFlow optimisation method achieves significantly better registration performance compared to all baseline methods, in terms of ASSD and HD90 errors. In addition, the SWD effectively captures global shape alignment, leading to substantial improvement compared to the ICP method that minimises local nearest neighbour distances. The qualitative comparisons between the ICP method and the AdamFlow method with the SWD objective are visualised in Figure~\ref{fig:mesh_affine}. Figure~\ref{fig:affine} demonstrates that AdamFlow exhibits a faster convergence rate and better generalisability than all baseline methods, as it can adaptively adjust the learning rate. AdamFlow starts to converge after only 30\% of the total integration steps for all three types of organ surfaces using the same learning rate.

\begin{figure}
\centering
\includegraphics[width=1.0\linewidth]{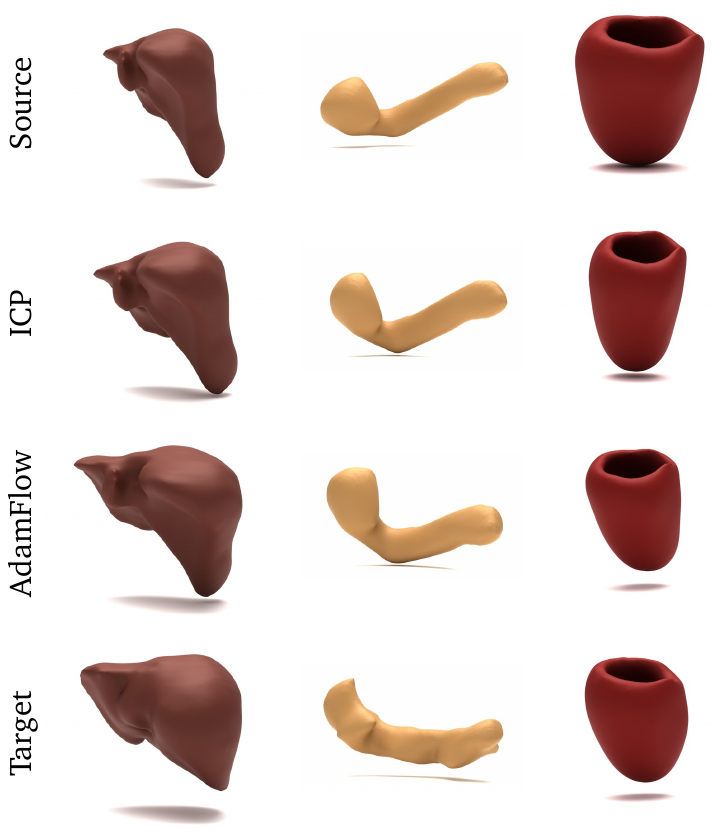}
\caption{Qualitative comparisons for affine surface registration of the liver, pancreas, and left ventricle from a source mesh to a target mesh. AdamFlow shows consistently better affine registration accuracy than the traditional ICP algorithm.}
\label{fig:mesh_affine}
\end{figure}

\paragraph{Non-rigid surface registration.} Usually, the non-rigid surface registration is performed after affine alignment. To reduce the computational cost, we only perform centre of mass alignment between the moving and target surface meshes before non-rigid surface registration. Similar to the procedure of Algorithm~\ref{alg:nonrigid}, all baseline approaches and AdamFlow optimise the hybrid objective functional (\ref{eq:nonrigid_registration}) in a coarse-to-fine manner, \emph{i.e.}, minimising the SWD for $K_{\mathrm{sw}}$ steps and then the Chamfer distance for $K_{\mathrm{cham}}$ steps. The comparative results are provided in Table~\ref{tab:nonrigid}, the error curves are plotted in Figure~\ref{fig:nonrigid}, and the illustration of coarse-to-fine registration is presented in Figure~\ref{fig:mesh_nonrigid}. 

Table~\ref{tab:nonrigid} demonstrates that AdamFlow achieves the best registration accuracy compared to all baseline approaches for all three types of organs, showing significant improvement in the surface registration of the liver and left ventricle. Table~\ref{tab:nonrigid} and Figure~\ref{fig:nonrigid} indicate that the SWD only captures global shape alignment without fine details. Using the Chamfer distance alone yields suboptimal results due to its sensitivity to initialisation. Our hybrid objective functional takes advantage of both SWD and Chamfer distance. The SWD with global distribution matching provides an ideal initialisation for the subsequent local refinement using the Chamfer distance, resulting in superior performance in non-rigid surface registration tasks.

\begin{figure}
\centering
\includegraphics[width=1.0\linewidth]{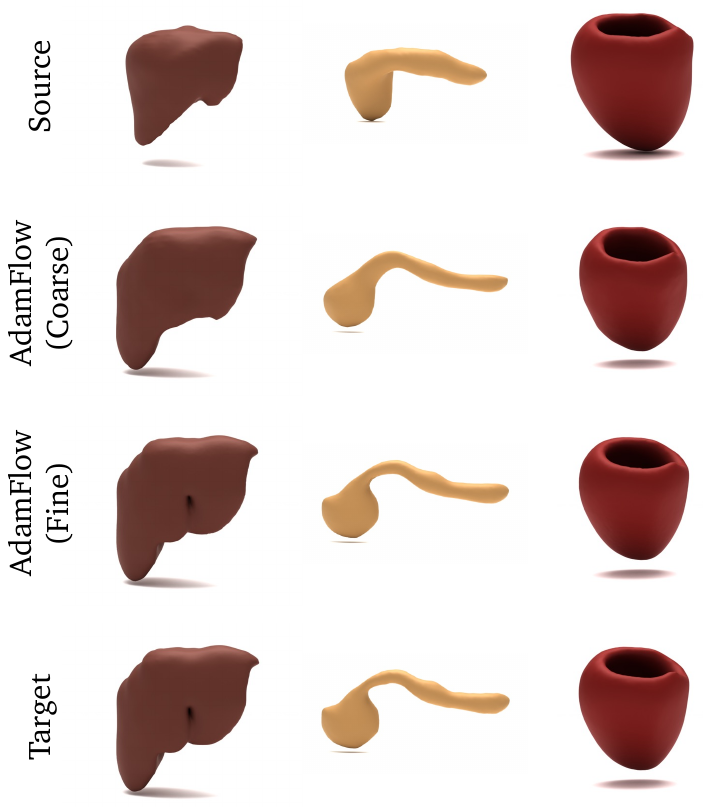}
\caption{Qualitative comparisons for non-rigid surface registration of the liver, pancreas, and left ventricle from a source mesh to a target mesh. AdamFlow is used to minimises the SWD for global surface alignment (coarse), and then optimises the Chamfer distance for local mesh refinement (fine), enabling coarse-to-fine surface registration.}
\label{fig:mesh_nonrigid}
\end{figure}

\subsection{Computational efficiency}
In this section, we examine the computational efficiency of the SWD-based objective functional and the AdamFlow optimisation algorithm. First, we compare the computational costs of SWD using different numbers of Monte Carlo projections $L$ with the ICP objective and Chamfer distance. We randomly sample two sets of $N$ points from a standard Gaussian distribution and compute their distance using different discrepancy metrics. We repeat such a procedure $1,000$ times and measure the average runtime required for each calculation. Figure~\ref{fig:runtime} compares the average runtime from $N=5,000$ to $50,000$ points. All distance metrics show $\mathcal{O}(N\log N)$ computational complexity with increasing numbers of points. The SWD with $L=4$ projections can be computed efficiently in 1.7 milliseconds (ms) for up to 50k points, while the ICP objective needs 5 ms and the Chamfer distance requires more than 10 ms due to bidirectional point matching.

Furthermore, we report the runtime of affine and non-rigid surface registration for all anatomical structures and all variants of the WGF-based optimisation methods in Table~\ref{tab:runtime}. Compared to other variants of the WGF, the proposed AdamFlow method only introduces a marginal increase in runtime, requiring a few seconds for affine and non-rigid surface registration tasks, while achieving significant improvement in registration performance.

\begin{figure}
\centering
\includegraphics[width=0.9\linewidth]{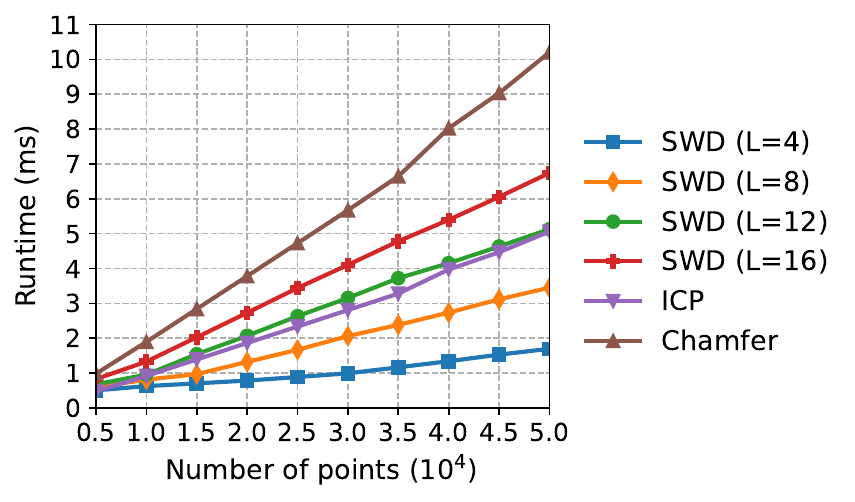}
\caption{Runtime (millisecond) for computing the discrepancy metrics across different numbers of points.}
\label{fig:runtime}
\end{figure}

\begin{table}
\setlength{\tabcolsep}{2.5pt}
\centering
\caption{Runtime (second) of different optimisation methods for affine and non-rigid surface registration of three organs.}
\begin{tabular}{ll|ccc}
\toprule
 & Method & Liver & Pancreas & Left ventricle\\
\midrule
\multirow{4}{*}{Affine}
& WGF & $2.47\pm0.33$ & $2.72\pm0.21$ & $1.54\pm0.06$ \\
& HBF & $2.53\pm0.34$ & $2.82\pm0.21$ & $1.60\pm0.05$ \\
& Nesterov & $2.53\pm0.34$ & $2.83\pm0.20$ & $1.60\pm0.06$ \\
& AdamFlow & $2.65\pm0.33$ & $3.07\pm0.21$ & $1.72\pm0.05$ \\
\midrule
\multirow{4}{*}{Non-rigid}
& WGF & $2.52\pm0.43$ & $1.40\pm0.15$ & $0.41\pm0.04$ \\
& HBF & $2.53\pm0.43$ & $1.43\pm0.15$ & $0.41\pm0.04$ \\
& Nesterov & $2.53\pm0.43$ & $1.43\pm0.15$ & $0.41\pm0.04$ \\
& AdamFlow & $2.55\pm0.43$ & $1.48\pm0.15$ & $0.42\pm0.04$ \\
\bottomrule
\end{tabular}
\label{tab:runtime}
\end{table}

\subsection{Hyperparameter selection}\label{sec:hyperparameter}
We conduct experiments on hyperparameter selection for objective functionals to explore how the hyperparameter tuning affects the performance of surface registration tasks. For AdamFlow method, the experiments for hyperparameter tuning are provided in Appendix \ref{sec:adamflow_param}. The registration accuracy is measured by ASSD and HD90 for all experiments.

\paragraph{Monte Carlo projection.} We investigate how the number of Monte Carlo projections $L$ in the SWD (\ref{eq:sliced_wasserstein_mc}) affects the surface registration performance. We consider different number of projections $L\in\{4,8,12,16\}$ in affine surface registration of pancreas using AdamFlow. Increasing the number of projections $L$ from $4$ to $8$ achieves a notable improvement in registration accuracy. Further increases of the projection number provide marginal improvements. However, as shown in Figure~\ref{fig:runtime}, the computational cost of the SWD increases in order of $\mathcal{O}(N\log N)$ with increasing number of projections.

\begin{figure}
\centering
\includegraphics[width=1.0\linewidth]{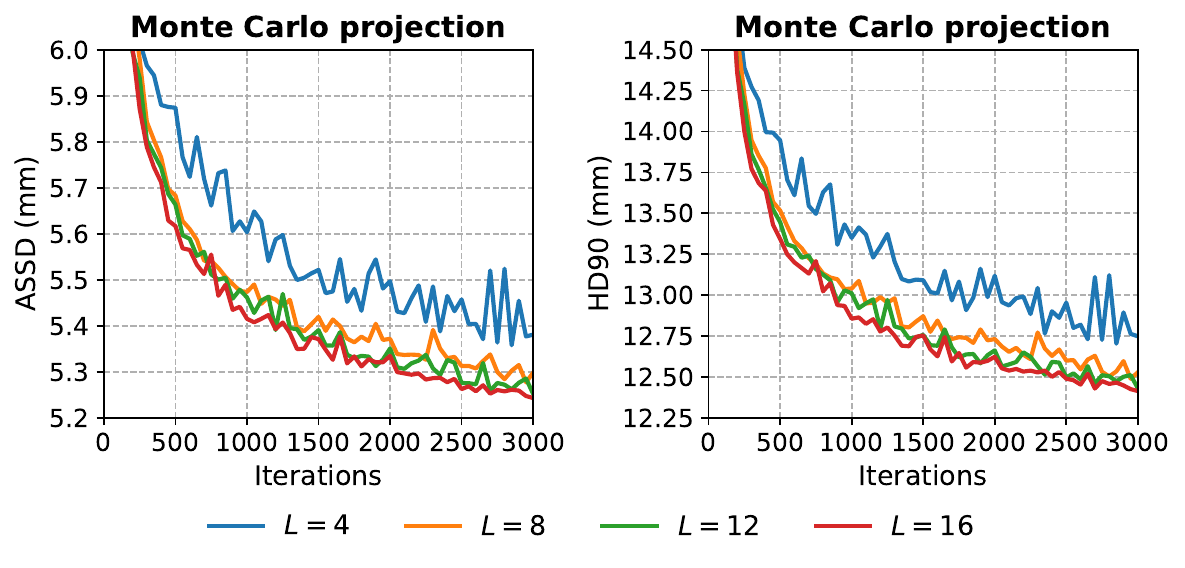}
\caption{Effect of different numbers of Monte Carlo projections $L$ on affine registration of pancreas surfaces with sliced Wasserstein discrepancy using AdamFlow.}
\label{fig:projection}
\end{figure}

\paragraph{Mesh Laplacian regularisation.} Lastly, we conduct experiments on different weights $\lambda_{\mathrm{lap}}$ of the mesh Laplacian regularisation term in non-rigid surface problems (\ref{eq:nonrigid_registration}). We consider non-rigid liver surface registration via AdamFlow with weights $\lambda_{\mathrm{lap}}\in\{0.0,2.0,4.0\}$. The results in Figure~\ref{fig:laplacian} show that the registration error increases substantially without the mesh Laplacian regularisation. Increasing the weight $\lambda_{\mathrm{lap}}$ from $2.0$ to $4.0$ slightly affects the registration accuracy. Moreover, a qualitative comparison across different regularisation weights $\lambda_{\mathrm{lap}}$ is illustrated in Figure~\ref{fig:mesh_laplacian}. The liver surface shows poor mesh quality without mesh regularisation.

\begin{figure}
\centering
\includegraphics[width=1.0\linewidth]{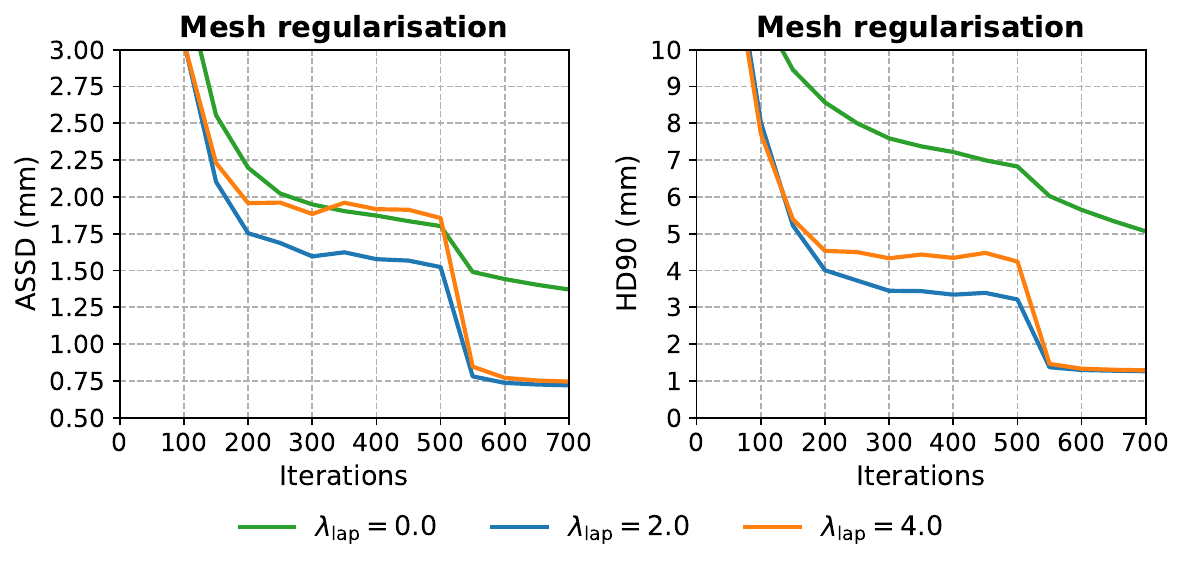}
\caption{Effect of mesh Laplacian regularisation weights $\lambda_{\mathrm{lap}}$ on non-rigid registration of liver surfaces via AdamFlow.}
\label{fig:laplacian}
\end{figure}

\begin{figure}
\centering
\includegraphics[width=1.0\linewidth]{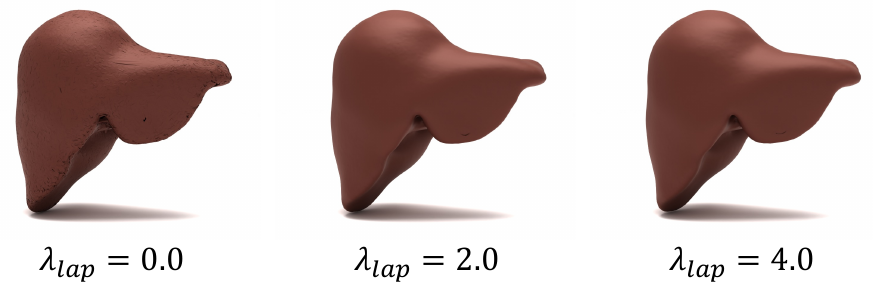}
\caption{Qualitative comparison for non-rigid registration of liver surfaces with different mesh Laplacian regularisation weights $\lambda_{\mathrm{lap}}$.}
\label{fig:mesh_laplacian}
\end{figure}

\section{Discussion}

In this work, we presented a distributional optimisation framework for accurate and efficient surface registration for medical imaging. For affine surface registration, the discrepancy between two surface meshes, which are represented as probability measures, is characterised by the SWD with $\mathcal{O}(N\log N)$ complexity. For non-rigid surface registration, we proposed a hybrid objective functional that incorporates the SWD and Chamfer distance for coarse-to-fine registration. To accelerate the distributional optimisation of the mesh alignment, we proposed AdamFlow, an adaptive WGF formulated by a PDE, extending the Adam optimisation from the Euclidean space to the space of probability measures. We theoretically proved that the solution of the AdamFlow PDE asymptotically converges to the set of critical points of the objective functional. Experiments on three different anatomical structures validated that the SWD-based metrics significantly improve the registration accuracy and computational efficiency than the local point matching approaches. In addition, AdamFlow solved non-convex surface registration problems effectively, achieving consistently better performance compared to momentum-based variants of the WGF across multiple anatomical structures.

One limitation of this work is that we only derived the asymptotic convergence property of AdamFlow without quantification of the convergence rate. This could be achieved by introducing additional conditions such as the convexity or Łojasiewicz property of the objective functional. Besides, we only regularised the mesh Laplacian smoothness for non-rigid surface registration to improve the mesh quality. There is no topological constraint that prevents the self-intersections of the surface meshes. Lastly, we only employed vanilla SWD to measure the discrepancy between the coordinates of the vertices or surface points, while the mesh properties such as the normals and curvatures were not considered. In future work, diffeomorphic registration frameworks could be incorporated to enforce topological consistency. We will also develop novel variants of the SWD to further improve the surface registration performance leveraging normal and curvature information.

\bibliographystyle{cas-model2-names}

\bibliography{ref}

\clearpage
\onecolumn
\appendix
\section{Theoretical and convergence analysis}

\subsection{Derivation of Adam ODE}
Following previous work \citep{barakat2021convergence}, the continuous Adam ODE system (\ref{eq:adam_ode}) is derived from the discrete Adam algorithm (\ref{eq:adam_discrete}). Note that the discrete Adam algorithm updates the biased estimation of the first and second moments in an exponential moving average (EMA) manner:
\begin{equation}\label{eq:discrete_ema}
m_{k+1}=\alpha m_{k} + (1-\alpha)\nabla f(x_{k}),~~~~
v_{k+1}=\beta v_{k} + (1-\beta)(\nabla f(x_{k}))^2.
\end{equation}
For Adam ODE, we substitute the discrete EMA with a continuous formula:
\begin{equation}\label{eq:continuous_ema}
\dot{m} = (1-\alpha)(\nabla f(x)-m),~~~~
\dot{v} = (1-\beta)((\nabla f(x))^2-v).
\end{equation}
In discrete Adam, the optimisation variable $x_k$ is updated according to the unbiased estimation of the first and second moments:
\begin{equation}
\hat{m}_k=m_k/(1-\alpha^k),~~~~\hat{v}_k=v_k/(1-\beta^k),~~~~x_{k} = x_{k-1}-\eta\frac{\hat{m}_k}{\sqrt{\hat{v}_k}+\epsilon},
\end{equation}
where $1-\alpha^k$ and $1-\beta^k$ are the coefficients for bias correction. Next, we derive the bias correction coefficients for the continuous EMA. For initial values $m(0)=v(0)=0$, the continuous EMA (\ref{eq:continuous_ema}) can be solved by
\begin{equation}\label{eq:ema_mv}
m(t)=(1-\alpha)\int_0^te^{(1-\alpha)(\tau-t)}\nabla f(x(\tau))\mathrm{d}\tau,~~~~
v(t)=(1-\beta)\int_0^te^{(1-\beta)(\tau-t)}(\nabla f(x(\tau)))^2\mathrm{d}\tau.
\end{equation}
Then, the bias correction coefficients can be computed by
\begin{equation}\label{eq:ema_bias}
(1-\alpha)\int_0^te^{(1-\alpha)(\tau-t)}\mathrm{d}\tau= 1-e^{-(1-\alpha)t},~~~~
(1-\beta)\int_0^te^{(1-\beta)(\tau-t)}\mathrm{d}\tau= 1-e^{-(1-\beta)t}.
\end{equation}
Therefore, the continuous optimisation variable $x(t)$ is evolved according to the following dynamics:
\begin{equation}
\dot{x} = -\eta g_t(m,v) = 
-\eta\frac{m/(1-e^{-(1-\alpha)t})}{\sqrt{v/(1-e^{-(1-\beta)t})}+\epsilon}.
\end{equation}
This completes the derivation of Adam ODE system (\ref{eq:adam_ode}).

\subsection{Derivation of AdamFlow}
In this section, we provide detailed proofs for Propositions \ref{prop1} and \ref{prop2} to demonstrate the equivalence between Adam ODE (\ref{eq:adam_ode}) and AdamFlow (\ref{eq:adamflow}) when the objective functional $F[\mu]$ is defined as the potential energy of the objective function $f$.
\vspace{6pt}

\noindent\textbf{Proposition 1.} If $(x(t),m(t),v(t))$ is a solution of Adam ODE (\ref{eq:adam_ode}) with an objective function $f$, then the trajectory of the Dirac measure $\delta_{(x(t),m(t),v(t))}$ is a solution of AdamFlow (\ref{eq:adamflow}) with the objective functional $F_f[\mu]=\int_{\mathbb{R}^d}f(x)\mathrm{d}\mu(x)$ in the distributional sense.

\begin{proof} We follow the proof procedure in previous work \citep{chen2025accelerating}. Suppose $(x(t),m(t),v(t))$ is a solution of the Adam ODE system (\ref{eq:adam_ode}), we need to verify the curve $\delta_{(x(t),m(t),v(t))}$ is a solution of AdamFlow (\ref{eq:adamflow}) in the distributional sense. Specifically, for any smooth test function $\varphi\in C_c^{\infty}(\mathbb{R}^d\times\mathbb{R}^d\times\mathbb{R}^d)$ with compact support, we need to verify that the curve $\delta_{(x(t),m(t),v(t))}$ satisfies the following weak formulation of the PDE (\ref{eq:adamflow}):
\begin{equation}\label{eq:adamflow_weak}
\begin{split} 
\frac{\mathrm{d}}{\mathrm{d}t}\int_{\mathbb{R}^{3d}}\rho_t\varphi
~\mathrm{d}x\mathrm{d}m\mathrm{d}v=
&-\int_{\mathbb{R}^{3d}}\nabla_m\cdot\left(\rho_t(1-\alpha)\left(\nabla_W F_f[\mu_t]-m\right)\right)\varphi~\mathrm{d}x\mathrm{d}m\mathrm{d}v\\
&-\int_{\mathbb{R}^{3d}}\nabla_v\cdot\left(\rho_t(1-\beta)\left((\nabla_W F_f[\mu_t])^2-v\right)\right)\varphi~\mathrm{d}x\mathrm{d}m\mathrm{d}v\\
&+\int_{\mathbb{R}^{3d}}\nabla_x\cdot\left(\rho_t\eta g_t(m,v)\right)\varphi~\mathrm{d}x\mathrm{d}m\mathrm{d}v.
\end{split}
\end{equation}
Let $\rho_t=\delta_{(x(t),m(t),v(t))}$ in Eq.~(\ref{eq:adamflow_weak}). Since $(x(t),m(t),v(t))$ is a solution of the Adam ODE (\ref{eq:adam_ode}), for the left-hand side, we have
\begin{equation}\label{eq:weak_lhs}
\begin{split}
\frac{\mathrm{d}}{\mathrm{d}t}\int_{\mathbb{R}^{3d}}\rho_t\varphi
~\mathrm{d}x\mathrm{d}m\mathrm{d}v
&=\frac{\mathrm{d}}{\mathrm{d}t}\int_{\mathbb{R}^{3d}}\delta_{(x(t),m(t),v(t))}\varphi~\mathrm{d}x\mathrm{d}m\mathrm{d}v
=\frac{\mathrm{d}}{\mathrm{d}t}\varphi(x(t),m(t),v(t))
=\nabla_x\varphi\cdot\dot{x}(t) + \nabla_m\varphi\cdot\dot{m}(t) + \nabla_v\varphi\cdot\dot{v}(t)\\
&=-\eta\nabla_x\varphi\cdot g_t(m,v)+(1-\alpha)\nabla_m\varphi\cdot(\nabla f(x(t))-m(t)) + (1-\beta)\nabla_v\varphi\cdot\left((\nabla f(x(t)))^2-v(t)\right).
\end{split}
\end{equation}
For the right-hand side, since $\nabla_W F_f[\mu]=\nabla\delta\int_{\mathbb{R}^d} f(x)\mathrm{d}\mu(x)=\nabla f$, we have 
\begin{equation}\label{eq:weak_rhs_1}
\begin{split}
-\int_{\mathbb{R}^{3d}}\nabla_m\cdot\left(\rho_t(1-\alpha)\left(\nabla_W F_f[\mu_t]-m\right)\right)\varphi~\mathrm{d}x\mathrm{d}m\mathrm{d}v
&=\int_{\mathbb{R}^{3d}}\rho_t(1-\alpha)\left(\nabla_W F_f[\mu_t]-m\right)\cdot\nabla_m\varphi~\mathrm{d}x\mathrm{d}m\mathrm{d}v\\
&=\int_{\mathbb{R}^{3d}}\delta_{(x(t),m(t),v(t))}(1-\alpha)\left(\nabla f-m\right)\cdot\nabla_m\varphi~\mathrm{d}x\mathrm{d}m\mathrm{d}v\\
&=(1-\alpha)\nabla_m\varphi\cdot(\nabla f(x(t))-m(t)).
\end{split}
\end{equation}
As $\varphi$ has compact support, the first equality holds according to the integration by parts. Similarly, we obtain
\begin{equation}\label{eq:weak_rhs_2}
-\int_{\mathbb{R}^{3d}}\nabla_v\cdot\left(\rho_t(1-\beta)\left((\nabla_W F_f[\mu_t])^2-v\right)\right)\varphi~\mathrm{d}x\mathrm{d}m\mathrm{d}v
=(1-\beta)\nabla_v\varphi\cdot((\nabla f(x(t)))^2-v(t)),
\end{equation}
and
\begin{equation}\label{eq:weak_rhs_3}
\int_{\mathbb{R}^{3d}}\nabla_x\cdot\left(\rho_t\eta g_t(m,v)\right)\varphi~\mathrm{d}x\mathrm{d}m\mathrm{d}v=-\eta\nabla_x\varphi\cdot g_t(m,v).
\end{equation}
Combining Eq.~(\ref{eq:weak_lhs}-\ref{eq:weak_rhs_3}), we have shown that the weak formulation (\ref{eq:adamflow_weak}) holds for any test function $\varphi$, and thus $\delta_{(x(t),m(t),v(t))}$ is a solution of AdamFlow (10) in the distributional sense.
\end{proof}

\noindent\textbf{Proposition 2.}
If $x(t)$ converges to a critical point of an objective function $f$, then the trajectory of the Dirac measure $\delta_{x(t)}$ converges to a critical point of the objective functional $F_f[\mu]=\int_{\mathbb{R}^d} f(x)\mathrm{d}\mu(x)$. 

\begin{proof}
Supposing $x(t)$ converges to a critical point $x_*$ of $f$ such that $\nabla f(x_*)=0$, then the curve $\delta_{x(t)}$ converges to a Dirac measure $\delta_{x_*}$, which satisfies 
\begin{equation}\label{eq:critical_point}
\int_{\mathbb{R}^{d}}\|\nabla_W F_f[\delta_{x_*}](x)\|~\mathrm{d}\delta_{x_*}(x)=\int_{\mathbb{R}^{d}}\|\nabla f(x)\|~\mathrm{d}\delta_{x_*}(x)=\|\nabla f(x_*)\|=0.
\end{equation}
Hence, $\delta_{x_*}$ is a critical point of the objective functional $F_f[\mu]$.
\end{proof}

\subsection{Convergence analysis}\label{appendix_A3}

\noindent\textbf{Theorem 1.} Assume the objective functional is lower bounded by $F[\mu]\geq\underline{F}$. Let $\rho_t\in\mathcal{P}_2(\mathbb{R}^d\times\mathbb{R}^d\times\mathbb{R}^d)$ be a solution of AdamFlow (\ref{eq:adamflow}) with $x$-marginal $\mu_t$ and initial condition $\rho_0(x,m,v)=\mu_0(x)\delta_{(0,0)}(m,v)$. If $\rho_t$ is absolute continuous and the coefficients satisfy $4\alpha-\beta<3$, then every limit point of the curve $\mu_t$ is a critical point of the objective functional $F[\mu]$. More specifically, let $\omega(\mu_0)$ denote an $\omega$-limit set that contains all limit points of $\mu_t$:
\begin{equation*}
\omega(\mu_0):=\big\{\mu_{\infty}~|~\mu_{\infty}=\rho_{\infty}^X,~\exists t_n\rightarrow\infty\mbox{ such that }\rho_{t_n}\rightarrow\rho_{\infty}\big\}.
\end{equation*}
Then $\omega(\mu_0)$ is a subset of the set of critical points of $F[\mu]$, i.e., $\omega(\mu_0)\subseteq\left\{\mu_*|\nabla_W F[\mu_*](x)=0,~\mbox{a.e.}~ x\in\mathrm{supp}(\mu_*)\right\}$.

\begin{proof}
We use Lyapunov approach \citep{barakat2021convergence,chen2025accelerating} to prove the convergence of AdamFlow (10). We define a Lyapunov functional:
\begin{equation}\label{eq:lyapunov}
\mathcal{E}_t[\rho_t]:=F[\mu_t]
+\frac{\eta}{2(1-\alpha)}\int_{\mathbb{R}^{3d}}\left\langle m,g_t(m,v)\right\rangle~\mathrm{d}\rho_t(x,m,v),
\end{equation}
where $\mu_t=\rho_t^X$ is the $x$-marginal of $\rho_t$. Since the objective functional $F[\mu_t]\geq\underline{F}$, the Lyapunov functional is also lower bounded, \emph{i.e.}, $\mathcal{E}_t[\rho_t]\geq\underline{F}$. Taking derivative with respect to time $t$, we obtain
\begin{equation}\label{eq:lyapunov_dt}
\begin{split}
\frac{d}{dt}\mathcal{E}_t[\rho_t]
=\int\delta F[\mu_t]\partial_t\rho_t
+\frac{\eta}{2(1-\alpha)}\int_{\mathbb{R}^{3d}}\partial_t\left\langle m, g_t(m,v)\right\rangle~\mathrm{d}\rho_t(x,m,v)+\frac{\eta}{2(1-\alpha)}\int\left\langle m, g_t(m,v)\right\rangle\partial_t\rho_t.
\end{split}
\end{equation}
For the first term, using integration by parts and by the fact that $\nabla_m\delta F[\mu_t]=\nabla_v\delta F[\mu_t]=0$, we have
\begin{equation}\label{eq:lyapunov_dt_1}
\begin{split}
\int\delta F[\mu_t]\partial_t\rho_t
&=\int\delta F[\mu_t]\left(\nabla_x\cdot\left(\rho_t\eta g_t(m,v)\right)-\nabla_m\cdot\left(\rho_t(1-\alpha)\left(\nabla_W F[\mu_t]-m\right)\right)-\nabla_v\cdot\left(\rho_t(1-\beta)\left((\nabla_W F[\mu_t])^2-v\right)\right)\right)\\
&=\int_{\mathbb{R}^{3d}}-\eta\left\langle\nabla_W F[\mu_t],g_t(m,v)\right\rangle~\mathrm{d}\rho_t(x,m,v).
\end{split}
\end{equation}
For the second term, we have
\begin{equation}\label{eq:lyapunov_dt_2}
\begin{split}
&\frac{\eta}{2(1-\alpha)}\int_{\mathbb{R}^{3d}}\partial_t\left\langle m, g_t(m,v)\right\rangle~\mathrm{d}\rho_t(x,m,v)
=\frac{\eta}{2(1-\alpha)}\int_{\mathbb{R}^{3d}}\partial_t\sum_{i=1}^d\frac{m_i^2/(1-e^{-(1-\alpha)t})}{\sqrt{v_i/(1-e^{-(1-\beta)t})}+\epsilon}~\mathrm{d}\rho_t(x,m,v)\\
=&\int_{\mathbb{R}^{3d}}\sum_{i=1}^d\frac{m_i^2/(1-e^{-(1-\alpha)t})}{\sqrt{v_i/(1-e^{-(1-\beta)t})}+\epsilon}
\left(-\frac{\eta}{2}\cdot\frac{e^{-(1-\alpha)t}}{1-e^{-(1-\alpha)t}}
+\frac{\eta(1-\beta)}{4(1-\alpha)}
\frac{e^{-(1-\beta)t}}{1-e^{-(1-\beta)t}}
\frac{\sqrt{v_i/(1-e^{-(1-\beta)t})}}{\sqrt{v_i/(1-e^{-(1-\beta)t})}+\epsilon}\right)~\mathrm{d}\rho_t(x,m,v)\\
\leq&\int_{\mathbb{R}^{3d}}
\left(-\frac{\eta}{2}\cdot\frac{1}{e^{(1-\alpha)t}-1}
+\frac{\eta(1-\beta)}{4(1-\alpha)}
\cdot\frac{1}{e^{(1-\beta)t}-1}\right)\langle m,g_t(m,v)\rangle~\mathrm{d}\rho_t(x,m,v).
\end{split}
\end{equation}
For the third term, similar to Eq.~(\ref{eq:lyapunov_dt_1}), we use integration by parts again and obtain
\begin{equation}\label{eq:lyapunov_dt_3}
\begin{split}
&\frac{\eta}{2(1-\alpha)}\int\left\langle m, g_t(m,v)\right\rangle\partial_t\rho_t\\
=&\frac{\eta}{2(1-\alpha)}\int_{\mathbb{R}^{3d}}(1-\alpha)\nabla_m\left\langle m, g_t(m,v)\right\rangle\cdot\left(\nabla_W F[\mu_t]-m\right)+(1-\beta)\nabla_v\left\langle m, g_t(m,v)\right\rangle\cdot\left((\nabla_W F[\mu_t])^2-v\right)~\mathrm{d}\rho_t(x,m,v)\\
=&\int_{\mathbb{R}^{3d}}\eta\left\langle\nabla_W F[\mu_t],g_t(m,v)\right\rangle
-\eta\left\langle m, g_t(m,v)\right\rangle\\
&~~~~~~~~+\frac{\eta(1-\beta)}{4(1-\alpha)}\sum_{i=1}^d\frac{m_i^2/(1-e^{-(1-\alpha)t})}{\left(\sqrt{v_i/(1-e^{-(1-\beta)t})}+\epsilon\right)^2}
\left(-\frac{(\nabla_W F[\mu_t])^2_i(x)}{\sqrt{v_i(1-e^{-(1-\beta)t})}}
+\sqrt{v_i/(1-e^{-(1-\beta)t})}\right)~\mathrm{d}\rho_t(x,m,v)\\
\leq & \int_{\mathbb{R}^{3d}}\eta\left\langle\nabla_W F[\mu_t],g_t(m,v)\right\rangle
-\left(\eta-\frac{\eta(1-\beta)}{4(1-\alpha)}\right)\left\langle m, g_t(m,v)\right\rangle~\mathrm{d}\rho_t(x,m,v).
\end{split}
\end{equation}
Combining Eq.~(\ref{eq:lyapunov_dt_1}-\ref{eq:lyapunov_dt_3}), it follows that
\begin{equation}\label{eq:lyapunov_dt_all}
\begin{split}
\frac{d}{dt}\mathcal{E}_t[\rho_t]\leq\int_{\mathbb{R}^{3d}}
-\eta\left(1-\frac{1-\beta}{4(1-\alpha)}+\frac{1}{2}\cdot\frac{1}{e^{(1-\alpha)t}-1}
-\frac{1-\beta}{4(1-\alpha)}
\cdot\frac{1}{e^{(1-\beta)t}-1}\right)\langle m,g_t(m,v)\rangle~\mathrm{d}\rho_t(x,m,v).
\end{split}
\end{equation}
Since $4\alpha-\beta<3$ and $\langle m,g_t(m,v)\rangle\geq0$, there exist a time $T>0$ and a constant $0<C<1-\frac{1-\beta}{4(1-\alpha)}$ such that for any $t\geq T$,
\begin{equation}\label{eq:lyapunov_dt_bound}
\frac{d}{dt}\mathcal{E}_t[\rho_t]\leq\int_{\mathbb{R}^{3d}}
-\eta C\langle m,g_t(m,v)\rangle~\mathrm{d}\rho_t(x,m,v)\leq 0.
\end{equation}
For $t\geq T$, the Lyapunov functional $\mathcal{E}_t[\rho_t]$ is monotonically decreasing and bounded $\underline{F}\leq\mathcal{E}_t[\rho_t]\leq\mathcal{E}_T[\rho_T]$. Hence, $\mathcal{E}_t[\rho_t]$ converges as $t\rightarrow\infty$, and we denote its limit by $\mathcal{E}_{\infty}:=\lim_{t\rightarrow\infty}\mathcal{E}_t[\rho_t]$. Integrating Eq.~(\ref{eq:lyapunov_dt_bound}) over $[T,\infty)$, we have 
\begin{equation}\label{eq:lyapunov_dt_integral}
0\leq\int_T^{\infty}\int_{\mathbb{R}^{3d}}
\langle m,g_t(m,v)\rangle~\mathrm{d}\rho_t(x,m,v)~\mathrm{d}t\leq \frac{1}{\eta C}\left(\mathcal{E}_T[\rho_T]-\mathcal{E}_{\infty}\right)<\infty.
\end{equation}
Based on the absolute continuity of $\rho_t$, we obtain that $\lim_{t\rightarrow\infty}\int_{\mathbb{R}^{3d}}
\langle m,g_t(m,v)\rangle~\mathrm{d}\rho_t(x,m,v)=0$ and thus $\mathcal{E}_{\infty}=\lim_{t\rightarrow\infty}F[\rho_t]$. According to the definition (\ref{eq:gt}) of $g_t(m,v)$ and the tightness of $\rho_t\in\mathcal{P}_2(\mathbb{R}^d\times\mathbb{R}^d\times\mathbb{R}^d)$, it can be further derived that $\lim_{t\rightarrow\infty}\int_{\mathbb{R}^{3d}}
\|m\|^2~\mathrm{d}\rho_t(x,m,v)=0$.

The tightness of $\rho_t$ ensures there exists a subsequence $t_n\rightarrow\infty$ such that $\rho_{t_n}\rightarrow\rho_{\infty}$ as $n\rightarrow\infty$. For any solution $\rho_t$ of AdamFlow (\ref{eq:adamflow}) with initial condition $\rho_0$, we show that the $\omega$-limit set $\omega(\mu_0)$ is contained in the set of critical points of the objective functional $F[\mu]$, \emph{i.e.}, $\omega(\mu_0)\subseteq\left\{\mu_*|\nabla_W F[\mu_*](x)=0,~\mbox{a.e.}~ x\in\mathrm{supp}(\mu_*)\right\}$.
For any test function $\varphi(m)\in C_c^{\infty}(\mathbb{R}^d)$, we test it on the AdamFlow (\ref{eq:adamflow}):
\begin{equation}\label{eq:adamflow_test_m}
\begin{split} 
\frac{\mathrm{d}}{\mathrm{d}t}\int_{\mathbb{R}^{3d}}\rho_t
\varphi(m)~\mathrm{d}x\mathrm{d}m\mathrm{d}v=
&-\int_{\mathbb{R}^{3d}}\nabla_m\cdot\left(\rho_t(1-\alpha)\left(\nabla_W F[\mu_t]-m\right)\right)\varphi(m)~\mathrm{d}x\mathrm{d}m\mathrm{d}v\\
&-\int_{\mathbb{R}^{3d}}\nabla_v\cdot\left(\rho_t(1-\beta)\left((\nabla_W F[\mu_t])^2-v\right)\right)\varphi(m)~\mathrm{d}x\mathrm{d}m\mathrm{d}v
+\int_{\mathbb{R}^{3d}}\nabla_x\cdot\left(\rho_t\eta g_t(m,v)\right)\varphi(m)~\mathrm{d}x\mathrm{d}m\mathrm{d}v\\
=&-\int_{\mathbb{R}^{3d}}\rho_t(1-\alpha)m\cdot\nabla\varphi(m)~\mathrm{d}x\mathrm{d}m\mathrm{d}v
+\int_{\mathbb{R}^{3d}}\rho_t(1-\alpha)\nabla_W F[\mu_t]\cdot\nabla\varphi(m)~\mathrm{d}x\mathrm{d}m\mathrm{d}v.
\end{split}
\end{equation}
Taking the subsequence $t_n$, we obtain 
\begin{equation}\label{eq:adamflow_converge_F}
\int_{\mathbb{R}^{3d}}\rho_{t_n} \nabla_W F[\mu_{t_n}]\cdot\nabla\varphi(m)~\mathrm{d}x\mathrm{d}m\mathrm{d}v=
\int_{\mathbb{R}^{3d}}\rho_{t_n} m\cdot\nabla\varphi(m)~\mathrm{d}x\mathrm{d}m\mathrm{d}v+\frac{1}{1-\alpha}\frac{\mathrm{d}}{\mathrm{d}t}\int_{\mathbb{R}^{3d}}\rho_t\varphi(m)~\mathrm{d}x\mathrm{d}m\mathrm{d}v\bigg|_{t=t_n}.
\end{equation}
For the first term on the right-hand side of Eq.~(\ref{eq:adamflow_converge_F}), it holds
\begin{equation}\label{eq:adamflow_converge_F_1}
\left|\int_{\mathbb{R}^{3d}}\rho_{t_n} m\cdot\nabla\varphi(m)~\mathrm{d}x\mathrm{d}m\mathrm{d}v\right|
\leq\int_{\mathbb{R}^{3d}}\rho_{t_n}\|m\|\cdot\|\nabla\varphi(m)\|~\mathrm{d}x\mathrm{d}m\mathrm{d}v
\leq\|\nabla\varphi(m)\|_{\infty}\left(\int_{\mathbb{R}^{3d}}\rho_{t_n}\|m\|^2~\mathrm{d}x\mathrm{d}m\mathrm{d}v\right)^{\frac{1}{2}}\rightarrow0,
\end{equation}
for $n\rightarrow\infty$, where $\|\nabla\varphi(m)\|_{\infty}<\infty$ since $\varphi\in C_c^{\infty}(\mathbb{R}^d)$ is smooth and has compact support. For the second term, since $\rho_t\in\mathcal{P}_2(\mathbb{R}^d\times\mathbb{R}^d\times\mathbb{R}^d)$ and $\varphi\in C_c^{\infty}(\mathbb{R}^d)$, we have
\begin{equation}\label{eq:adamflow_converge_F_2_integral}
\left|\lim_{n\rightarrow\infty}\int_{0}^{t_n}\frac{\mathrm{d}}{\mathrm{d}t}\int_{\mathbb{R}^{3d}}\rho_t\varphi(m)~\mathrm{d}x\mathrm{d}m\mathrm{d}v\right|
=\left|\int_{\mathbb{R}^{3d}}\rho_{\infty}\varphi(m)~\mathrm{d}x\mathrm{d}m\mathrm{d}v-\int_{\mathbb{R}^{3d}}\rho_0\varphi(m)~\mathrm{d}x\mathrm{d}m\mathrm{d}v\right|<\infty.
\end{equation}
Consequently, there exists a subsequence $t_{n_k}\rightarrow\infty$ of $t_n$ such that 
\begin{equation}\label{eq:adamflow_converge_F_2}
\lim_{k\rightarrow\infty}\frac{\mathrm{d}}{\mathrm{d}t}\int_{\mathbb{R}^{3d}}\rho_t\varphi(m)~\mathrm{d}x\mathrm{d}m\mathrm{d}v\bigg|_{t=t_{n_k}}\rightarrow0.
\end{equation}
Taking Eq.~(\ref{eq:adamflow_converge_F_1}) and (\ref{eq:adamflow_converge_F_2}) together,
\begin{equation}
\int_{\mathbb{R}^{3d}}\rho_{\infty} \nabla_W F[\mu_{\infty}]\cdot\nabla\varphi(m)~\mathrm{d}x\mathrm{d}m\mathrm{d}v=\lim_{k\rightarrow\infty}\int_{\mathbb{R}^{3d}}\rho_{t_{n_k}} \nabla_W F[\mu_{t_{n_k}}]\cdot\nabla\varphi(m)~\mathrm{d}x\mathrm{d}m\mathrm{d}v=0.
\end{equation}
Since $\varphi(m)$ is an arbitrary test function, we can conclude that $\nabla_W F[\mu_\infty](x)=0$ for a.e. $x\in\textrm{supp}(\mu_{\infty})$, which means $\omega(\mu_0)\subseteq\left\{\mu_*|\nabla_W F[\mu_*](x)=0,~\mbox{a.e.}~ x\in\mathrm{supp}(\mu_*)\right\}$. In other words, for any limit points $\rho_{\infty}$ of the solution $\rho_t$ of AdamFlow (\ref{eq:adamflow}), its $x$-marginal $\mu_{\infty}\in\omega(\mu_0)$ is a critical point of the objective functional $F[\mu]$.

\end{proof}

\subsection{Particle-based approximation}

\noindent\textbf{Proposition 3.} Suppose the solution $\rho_t$ of AdamFlow (\ref{eq:adamflow}) is approximated by the empirical measure $\hat{\rho}_t$ defined in (\ref{eq:empirical}). Then, the particle $(X^n,M^n,V^n)$ evolves according to the following ODE system:
\begin{equation*}
\left\{\begin{aligned}
\dot{M}^n & = (1-\alpha)\left(\nabla_W F[\hat{\mu}_t](X^n)-M^n\right), \\
\dot{V}^n & = (1-\beta)\left((\nabla_W F[\hat{\mu}_t](X^n))^2-V^n\right), \\
\dot{X}^n & = -\eta g_t(M^n,V^n),
\end{aligned} \right.
\end{equation*}
for $n=1,...,N$, where $(X^n(0),M^n(0),V^n(0))=(X_0^n,0,0)$ is the initial value of the particles.

\begin{proof}
Let $\hat{\mu}_t=\hat{\rho}_t^X:=\frac{1}{N}\sum_{n=1}^N\delta_{X^n(t)}$ denote the $x$-marginal of $\hat{\rho}_t$. For any test function $\varphi\in C_c^{\infty}(\mathbb{R}^d\times\mathbb{R}^d\times\mathbb{R}^d)$, the weak formulation of the AdamFlow PDE (10) is expressed as
\begin{equation}\label{eq:weak_particle}
\begin{split}
\int_{\mathbb{R}^{3d}}
\bigg[\partial_t\hat{\rho}_t
&-\nabla_x\cdot(\hat{\rho}_t\eta g_t(m,v))
+\nabla_m\cdot\left(\hat{\rho}_t(1-\alpha)\left(\nabla_W F[\hat{\mu}_t]-m\right)\right)+\nabla_v\cdot\left(\hat{\rho}_t(1-\beta)\left((\nabla_W F[\hat{\mu}_t])^2-v\right)\right)\bigg]\varphi~\mathrm{d}x\mathrm{d}m\mathrm{d}v \\
=\frac{1}{N}\sum_{n=1}^N\bigg[&\left(\nabla_x\varphi\cdot\dot{X}^n + \nabla_m\varphi\cdot\dot{M}^n + \nabla_v\varphi\cdot\dot{V}^n\right)
+\eta\nabla_x\varphi\cdot g_t(M^n,V^n)\\
&-\nabla_m\varphi\cdot(1-\alpha)(\nabla_W F[\hat{\mu}_t](X^n)-M^n)
-\nabla_v\varphi\cdot(1-\beta)\left((\nabla_W F[\hat{\mu}_t](X^n))^2-V^n\right)
\bigg]\\
=\frac{1}{N}\sum_{n=1}^N \bigg[&\nabla_x\varphi\cdot(\dot{X}^n+\eta g_t(M^n,V^n))
+\nabla_m\varphi\cdot\left(\dot{M}^n - (1-\alpha)(\nabla_W F[\hat{\mu}_t](X^n)-M^n))\right) \\
&+\nabla_v\varphi\cdot\left(\dot{V}^n-(1-\beta)\left((\nabla_W F[\hat{\mu}_t](X^n))^2-V^n\right)\right)\bigg]=0.
\end{split}
\end{equation}
Therefore, for $n=1,...,N$, we have
\begin{equation}\label{eq:system_particle}
\dot{X}^n+\eta g_t(M^n,V^n)=0,~~~\dot{M}^n - (1-\alpha)(\nabla_W F[\hat{\mu}_t](X^n)-M^n))=0,~~~~\dot{V}^n-(1-\beta)\left((\nabla_W F[\hat{\mu}_t](X^n))^2-V^n\right)=0,
\end{equation}
such that Eq.~(\ref{eq:weak_particle}) holds for arbitrary test function $\varphi$. This concludes the proof.

\end{proof}

\section{Hyperparameter tuning of AdamFlow}\label{sec:adamflow_param}

We conduct experiments for the hyperparameter tuning of AdamFlow, including the learning rate $\eta$ and exponential decay rates $\alpha$ and $\beta$. The registration accuracy is measured by ASSD and HD90 for all experiments.

\paragraph{Learning rate.} We adjust the learning rate $\eta$ of AdamFlow for the affine registration of the left ventricle surfaces. We let the learning rate $\eta\in\{0.001,0.005,0.01,0.05,0.1\}$ and the results are shown in Figure~\ref{fig:learning_rate}. Decreasing the learning rate $\eta$ from $0.01$ to $0.005$ has minimal effect on convergence, while a further decrease to $0.001$ results in slower convergence. Learning rates greater than $0.01$ cause unstable optimisation.

\paragraph{Exponential decay rate.} We tune the exponential decay rates $\alpha$ and $\beta$ of AdamFlow for the affine registration of the left ventricle surfaces. For combinations of $\alpha\in\{0.8,0.9\}$ and $\beta\in\{0.85,0.95\}$, the performance of AdamFlow is presented in Figure~\ref{fig:decay_rate}. Figure~\ref{fig:decay_rate} indicates that different combinations of decay rates lead to similar convergence behaviour, demonstrating that AdamFlow is not sensitive to decay rates.

\begin{figure}
\centering
\begin{subfigure}{0.51\textwidth}
\centering
\includegraphics[width=1.0\linewidth]{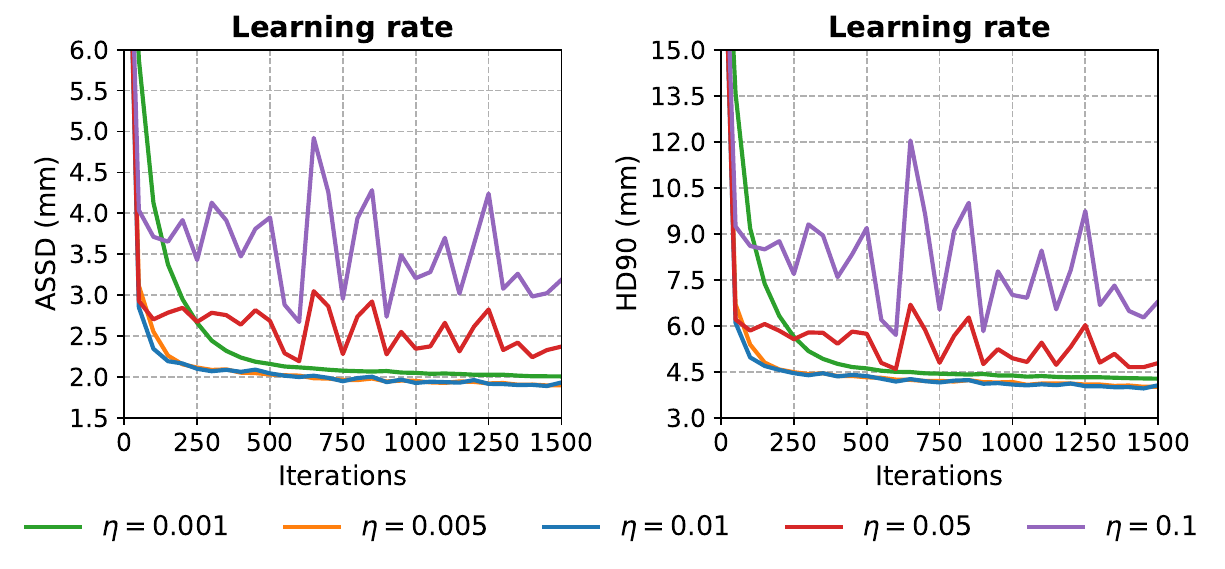}
\caption{Effect of different learning rates $\eta$.}
\label{fig:learning_rate}
\end{subfigure}
\hfill
\begin{subfigure}{0.47\textwidth}
\centering
\includegraphics[width=1.0\linewidth]{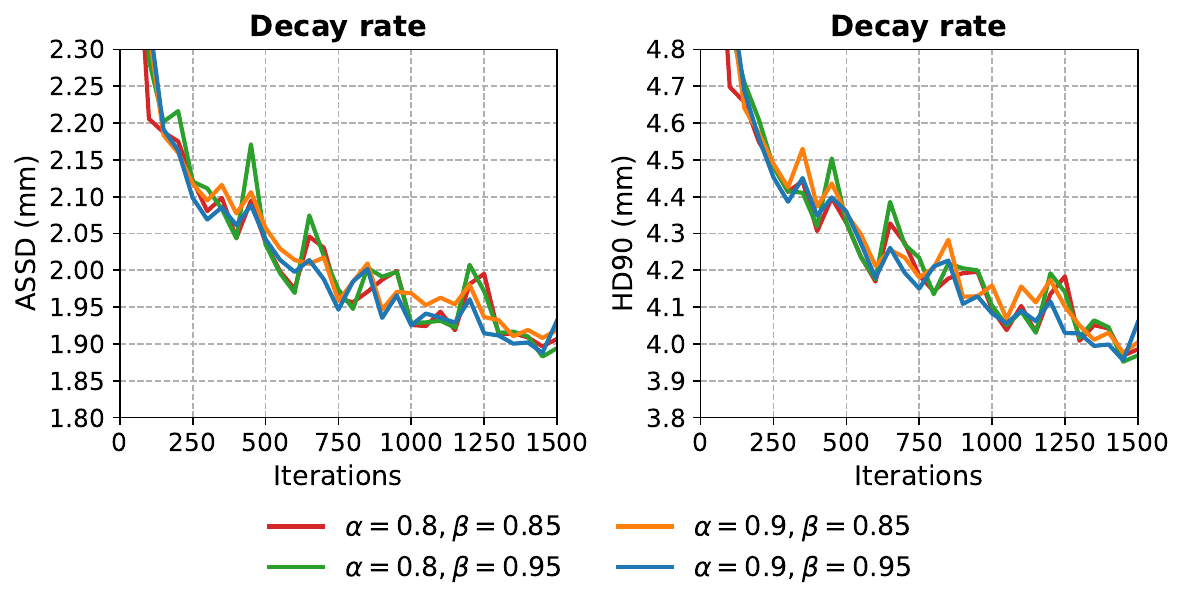}
\caption{Effect of different exponential decay rates $\alpha$ and $\beta$.}
\label{fig:decay_rate}
\end{subfigure}
\caption{Effect of hyperparameters on affine registration of left ventricle surfaces using AdamFlow.}
\end{figure}

\end{document}